\title{\textbf{The Benefit of Being Bayesian in Online Conformal Prediction}}
\author{
Zhiyu Zhang\\
Harvard University\\
\texttt{zhiyuz@seas.harvard.edu}\\
  \and
Zhou Lu \\
Princeton University\\
\texttt{zhoul@princeton.edu} \\
  \and
Heng Yang\\
Harvard University\\
\texttt{hankyang@seas.harvard.edu}\\
}
\date{}
\begin{document}
\maketitle

\begin{abstract}
Based on the framework of Conformal Prediction (CP), we study the online construction of confidence sets given a black-box machine learning model. By converting the target confidence levels into quantile levels, the problem can be reduced to predicting the quantiles (in hindsight) of a sequentially revealed data sequence. Two very different approaches have been studied previously:
\begin{itemize}
    \item Assuming the data sequence is iid or exchangeable, one could maintain the empirical distribution of the observed data as an algorithmic belief, and directly predict its quantiles. 
    \item Due to the fragility of statistical assumptions, a recent trend is to consider the non-distributional, adversarial setting and apply first-order online optimization algorithms to moving quantile losses. However, it requires the oracle knowledge of the target quantile level, and suffers from a previously overlooked monotonicity issue due to the associated loss linearization.
\end{itemize}

This paper presents an adaptive CP algorithm that combines their strengths. Without any statistical assumption, it is able to answer multiple arbitrary confidence level queries with low regret, while also overcoming the monotonicity issue suffered by first-order optimization baselines. Furthermore, if the data sequence is actually iid, then the same algorithm is automatically equipped with the ``correct'' coverage probability guarantee. 

To achieve such strengths, our key technical innovation is to regularize the aforementioned algorithmic belief (the empirical distribution) by a Bayesian prior, which robustifies it by simulating a non-linearized Follow the Regularized Leader (FTRL) algorithm on the output. Such a belief update backbone is shared by prediction heads targeting different confidence levels, bringing practical benefits analogous to the recently proposed concept of U-calibration \citep{kleinberg2023u}.
\end{abstract}

\section{Introduction}\label{section:intro}

\begin{wrapfigure}{R}{0.41\textwidth}
\vspace{-10pt}
    \centering
\includegraphics[width=0.4\textwidth]{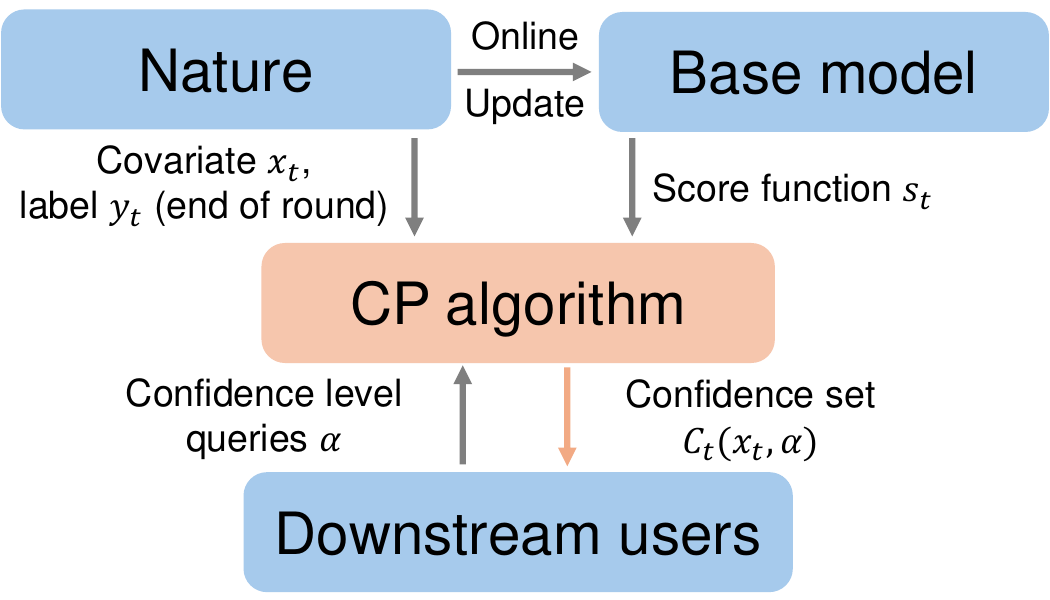}
    \caption{The CP interaction protocol.}
    \label{fig:Diagram}
    \vspace{-10pt}
\end{wrapfigure}

Modern machine learning (ML) models are better at point prediction compared to probabilistic prediction. For example, when given an image classification task, they are better at responding ``\emph{this image is most likely a white cat}'', rather than ``\emph{I'm 90\% sure this image is an animal, 60\% sure it's a cat, and 30\% sure it's a white cat}''. For downstream users, the more nuanced probabilistic predictions are often important for risk assessment. The challenge, however, lies in aligning the model's own uncertainty evaluation with its actual performance in the real world. 

\emph{Conformal prediction} (CP) \citep{vovk2005algorithmic} has recently emerged as a premier framework to address this challenge, as it blends the empirical strength of modern ML with the theoretical soundness of traditional statistical methods. As illustrated in Figure~\ref{fig:Diagram}, CP algorithms make \emph{confidence set predictions} on the label space, by sequentially interacting with three other parties: the \emph{nature} (i.e., the data stream), a \emph{black-box ML model}, and \emph{downstream users}. Concretely, let $\calX$ and $\calY$ be the covariate space and the label space. In each (the $t$-th) round, we study the following interaction protocol. 
\begin{enumerate}
\item We, as the CP algorithm, observe a \emph{target covariate} $x_t\in\calX$ from the nature, and a \emph{score function} $s_t:\calX\times\calY\rightarrow [0,R]$ generated by a black-box ML model called \textsc{Base}.
\item The downstream users select a finite set of \emph{confidence level queries}, $A_t\subset [0,1]$.
\item Given each confidence level query $\alpha\in A_t$, we predict a \emph{score threshold} $r_t(x_t,\alpha)$ based on existing observations, which leads to a \emph{confidence set prediction}\footnote{Without loss of generality, we assume the score function $s_t$ is \emph{negatively oriented}: smaller $s_t(x_t,y)$ means the ML model is more certain that the candidate label $y$ is the true label $y_t$; e.g., \citep{romano2020classification}.}
\begin{equation}\label{eq:cp}
\calC_t(x_t,\alpha)=\left\{y\in\calY:s_t(x_t,y)\leq r_t(x_t,\alpha)\right\}.
\end{equation}
\item Nature reveals the \emph{ground truth label} $y_t\in\calY$, and thus the \emph{true score} $r^*_t\defeq s_t(x_t,y_t)$, to us. 
\item The $(x_t,y_t)$ pair is passed to \textsc{Base}, which it optionally uses to generate the score function $s_{t+1}$. 
\end{enumerate}

The intuition is that by sequentially evaluating \textsc{Base} on the target data, we generate better score thresholds that ``correct'' the uncertainty evaluation from \textsc{Base} itself. Our goal is thus clear in a very broad sense -- predicting confidence sets with theoretically guaranteed \emph{validity}. Say if a user only queries the confidence level $\alpha=90\%$, then our CP algorithm needs to provide certain quantitative evidence that incentivizes the user to treat $\calC_t(x_t,\alpha)$ as the $90\%$ confidence set about the true label $y_t$. The same should hold true when the user queries $\alpha=80\%, 95\%, 99\%,\ldots$, or all of these values at the same time. While this is well-studied under various statistical assumptions introduced later, the present work is mainly about the \emph{adversarial}, game-theoretic setting where no statistical assumption is imposed at all. 

\subsection{Background}\label{subsection:background}

We first present the necessary background before our main results. Starting from the simplest case, let us assume that ($i$) within a time horizon $T$, the collection of true scores $r^*_{1:T}$ are iid samples of a random variable $X$ with strictly positive density, and ($ii$) the $\alpha$-\emph{quantile} of $X$,\footnote{The $\alpha$-quantile of a real random variable $X$ is defined as $q_{\alpha}(X)\defeq\min\{x:\P(X\leq x)\geq \alpha\}$.} denoted by $q_{\alpha}(X)$, is known. A natural strategy is thus predicting $r_t(x_t,\alpha)=q_{\alpha}(X)$, which ensures that the \emph{coverage condition} $y_t\in\calC_t(x_t,\alpha)$ holds with probability exactly $\alpha$. This certifies the validity of $\calC_t(x_t,\alpha)$ in the strongest probabilistic sense. 

Although the imposed assumptions are clearly unrealistic, this example illustrates a central principle of CP: the predicted score threshold $r_t(x_t,\alpha)$ should ideally be the $\alpha$-quantile of \emph{some} distribution of $r^*_{1:T}$. A key challenge of CP is thus generalizing this principle to more realistic settings, as described below. 
\begin{itemize}
\item \emph{Direct approach:} Still assuming the sequence $r^*_{1:T}$ is iid but the population quantile $q_{\alpha}(X)$ is unknown, we could instead estimate $q_{\alpha}(X)$ on the fly. Specifically, our algorithm maintains the empirical distribution of $r^*_{1:t-1}$, denoted by $P_t=\bar P(r^*_{1:t-1})$, as an \emph{algorithmic belief} about the unknown distribution of $X$. When queried with an arbitrary confidence level $\alpha$, the algorithm ``post-processes'' the belief by setting $r_t(x_t,\alpha)=q_{\alpha}(P_t)$. In the standard terms of statistical learning, this is equivalent to \emph{Empirical Risk Minimization} (ERM) with the \emph{quantile loss} $l_\alpha(r,r^*)\defeq(\bm{1}[r\geq r^*]-\alpha)(r-r^*)$, i.e., 
\begin{equation}\label{eq:erm}
r_t(x_t,\alpha)=q_{\alpha}(P_t)\in\argmin_{r\in[0,R]}\sum_{i=1}^{t-1}l_\alpha(r,r^*_i).
\end{equation}
A well-known refinement of this approach called \emph{Split Conformal} \citep{papadopoulos2002inductive} sets $r_t(x_t,\alpha)=q_{\alpha+o(1)}(P_t)$, where the $o(1)$ offset (with respect to $t\goto\infty$) ensures that even under a relaxation of the iid condition called \emph{exchangeability}, a suitable notion of coverage probability is lower bounded by $\alpha$ \citep{roth2022uncertain}.

\item \emph{Indirect approach:} However, even the iid or exchangeability assumption is often too strong in practice, e.g., when \textsc{Base} itself is updated using gradient descent in Step 5 of the interaction protocol. Motivated by this challenge, a recent trend \citep{gibbs2021adaptive} is to remove all statistical assumptions, and instead estimate the empirical quantile of $r^*_{1:T}$ using first-order optimization algorithms from \emph{adversarial online learning} \citep{hazan2023introduction,orabona2023modern}. Taking gradient descent for example, such an approach amounts to picking an initialization $r_1(x_1,\alpha)\in[0,R]$ and following with the incremental update
\begin{equation}\label{eq:first_order_update}
r_{t+1}(x_{t+1},\alpha)=r_t(x_t,\alpha)-\eta_t\partial l_\alpha(r_t(x_t,\alpha),r^*_t),
\end{equation}
where $\eta_t>0$ is the \emph{learning rate}, and $\partial l_\alpha(r,r^*)$ can be any subgradient of the quantile loss $l_\alpha$ with respect to the first argument. Due to the \emph{absence of probability} in this setting, alternative performance metrics have to be considered, such as the \emph{post-hoc coverage frequency} and the \emph{regret}.
\end{itemize}

How do these two approaches compare? Although the approach of first-order optimization does not need statistical assumptions, it requires being ``iterate-centric'' rather than ``data-centric'': one has to fix a single confidence level $\alpha$ beforehand, and the predicted threshold $r_t(x_t,\alpha)$ depends on how previous predictions $r_1(x_1,\alpha),\ldots,r_{t-1}(x_{t-1},\alpha)$ compare to the true scores $r^*_{1:t-1}$, rather than just $r^*_{1:t-1}$ itself. This leads to a previously overlooked \emph{monotonicity issue} that undermines the trustworthiness of the obtained confidence sets: 
\begin{itemize}
\item Two copies of an algorithm with $\alpha_1<\alpha_2$ can output $\calC_t(x_t,\alpha_2)\subsetneq \calC_t(x_t,\alpha_1)$, even if the initializations are the same. That is, the higher-confidence set is strictly smaller, violating the monotonicity requirement of probability measures. See Section~\ref{section:validity} for details. 
\end{itemize}

In contrast, the direct ERM approach does not suffer from this issue, but the problem is that being equivalent to \emph{Follow the Leader} (FTL) in online learning, it is well-known that ERM can suffer the vacuous $\Omega(T)$ regret on adversarial quantile losses. This motivates the important question: 
\begin{center}
\textit{Is there an adaptive CP algorithm that enjoys the best of both worlds?}
\end{center}

\subsection{Our Result}\label{subsection:contribution}

We answer this question in the affirmative, by presenting a single CP algorithm with the following strengths:
\begin{itemize}
\item Just like the ERM approach, it can answer multiple arbitrary confidence level queries online. 
\item Without any statistical assumption, it guarantees the optimal regret bound
\begin{equation*}
\sum_{t=1}^Tl_\alpha(r_t(x_t,\alpha),r^*_t)-\sum_{t=1}^Tl_\alpha(q_{\alpha}(r^*_{1:T}),r^*_t)=O(R\sqrt{T}),
\end{equation*}
simultaneously for all time horizon $T$, true score sequence $r^*_{1:T}$, and confidence level $\alpha\in[0,1]$. Notice that the comparator $q_{\alpha}(r^*_{1:T})$ would be a natural fixed prediction had one known the empirical distribution of the true score sequence $r^*_{1:T}$ beforehand.
\item Unlike first-order optimization baselines, it does not suffer from the aforementioned monotonicity issue, due to being ``data-centric'' rather than ``iterate-centric''. 
\item If the true scores $r^*_{1:T}$ are indeed iid, then it automatically achieves almost the same guarantees, including the \emph{dataset-conditional coverage probability} and the \emph{excess quantile risk}, as the ERM baseline. That is, the proposed algorithm can \emph{adapt} to the easier statistical setting of CP, despite being designed for the more general adversarial setting. 
\end{itemize}

From a technical perspective, our algorithm is a remarkably simple Bayesian analogue of the ERM approach. Instead of setting its algorithmic belief as the empirical distribution of the past, $P_t=\bar P(r^*_{1:t-1})$, we set it as the convex combination
\begin{equation}\label{eq:belief_first}
P_t=\lambda_t P_0+(1-\lambda_t)\bar P(r^*_{1:t-1}),
\end{equation}
where $P_0$ is a prior, and $\lambda_{t}\in[0,1]$ is a hyperparameter. The key observations are two-fold.
\begin{itemize}
\item If $r^*_{1:t-1}$ are iid samples of an underlying distribution, then Eq.(\ref{eq:belief_first}) is the posterior mean of a \emph{Bayesian distribution estimator} based on the \emph{Dirichlet process,} whereas the existing choice of $P_t=\bar P(r^*_{1:t-1})$ corresponds to the frequentist distribution estimator. Choosing $\lambda_t=O(t^{-1/2})$ leads to the state-of-the-art statistical guarantees. 
\item Without any statistical assumption, the Bayesian version of $P_t$ offers the advantage of \emph{downstream regularization}: the associated score threshold $r_t(x_t,\alpha)=q_{\alpha}(P_t)$ is equivalent to the output of a non-linearized \emph{Follow the Regularized Leader} (FTRL) algorithm from adversarial online learning. The desirable regret bound naturally follows, and the monotonicity issue is resolved as well. 
\end{itemize}

\subsection{Related Work}\label{subsection:related}

For the background of CP and its numerous applications, the readers are referred to several excellent resources \citep{vovk2005algorithmic,roth2022uncertain,angelopoulos2023conformal,tibshirani2023advanced}. 

\paragraph{Online CP} Our work contributes to the growing body of literature on online adversarial conformal prediction, initiated by \citep{gibbs2021adaptive}. They showed that running gradient descent with constant learning rate can achieve low \emph{coverage frequency error}, i.e., 
\begin{equation}\label{eq:coverage}
\abs{\alpha-T^{-1}\sum_{t=1}^T\bm{1}[r^*_t\leq r_t(x_t,\alpha)]}=o(1),
\end{equation}
and its sliding-window analogues. \citep{bastani2022practical} demonstrated an intriguing weakness of this performance metric: 
one could trivially satisfy this coverage frequency bound by predicting a data-independent alternation between the empty set and the entire label space. To rule out such pathological cases, it has thus been standard to consider an additional performance metric, such as the regret \citep{bhatnagar2023improved,gibbs2024conformal,zhang2024discounted} or the \emph{multi-calibrated} coverage frequency \citep{bastani2022practical}. Under the additional iid assumption, \citep{angelopoulos2024online} studied the asymptotic coverage probability achieved by gradient descent; in contrast, our best-of-both-world guarantees are all non-asymptotic. 

Different from the literature, the present work emphasizes regret minimization in online CP, as we believe it offers advantages \emph{even over simultaneously bounding the regret and the coverage frequency error} (since loss linearization is not necessary anymore). See Section~\ref{subsection:adversarial} for a detailed discussion. 

\paragraph{Bayesian uncertainty quantification} Our work is also drastically different from the typical uncertainty quantification pipeline in \emph{Bayesian machine learning} \citep{neal2012bayesian}, where one directly constructs posteriors on the parameter space and then uses it for confidence set predictions. While such a pipeline often suffers from scalability issues, we embed the construction of posteriors into the conformal prediction framework, thereby making the resulting algorithm both modular (i.e., can operate on top of arbitrary black-box ML models) and computationally efficient. Related but different from our focus, \cite{fong2021conformal} studied a CP problem where the base ML model itself is Bayesian. 

\paragraph{Adversarial Bayes} A common challenge for Bayesian algorithms is the choice of the prior. In this work, we exploit the algorithmic equivalence of Bayesian priors and regularizers, thereby utilizing the systematic study of regularization in adversarial online learning to demonstrate that \emph{the uniform prior is minimax optimal} for regret minimization. That is, one could avoid tuning the prior in a theoretically justified manner. Broadly speaking, such an idea of ``adversarial Bayes'' is closely related to a classical algorithm family in online learning called \emph{Follow the Perturbed Leader} (FTPL) \citep{kalai2005efficient}. Notable examples of FTPL include \emph{Thompson sampling} \citep{thompson1933likelihood,lattimore2020bandit,xu2023bayesian}, a prevalent Bayesian approach for bandits and reinforcement learning, and \emph{U-calibration} \citep{kleinberg2023u,luo2024optimal}, a recently proposed framework for loss-agnostic decision making. Despite being deterministic, our approach resembles the high level idea of U-calibration. See Section~\ref{subsection:adversarial} for a detailed discussion of their connections and differences.

Besides U-calibration, \emph{omniprediction}  \citep{gopalan2022omnipredictors,garg2024oracle} is another iconic framework for loss-agnostic decision making, whose main idea is to maintain a \emph{multi-calibrated} algorithmic belief in the sense of \citep{hebert2018multicalibration}. Our approach does not require calibration as an underlying mechanism. 

\subsection{Notation}

This paper studies the \emph{marginal} setting of CP, which means the threshold prediction $r_t(x_t,\alpha)$ will be independent of $x_t$; therefore we write it as $r_t(\alpha)$ for conciseness. For any symbol $x$, $x_{1:t}$ (e.g., $r^*_{1:t}$) represents the tuple $[x_1,\ldots,x_t]$. $\bar P(\cdot)$ denotes the empirical distribution of its input, and $q_{\alpha}(\cdot)$ denotes the $\alpha$-quantile. Our regret bound concerns the quantile (or pinball) loss defined as $l_\alpha(r,r^*)\defeq(\bm{1}[r\geq r^*]-\alpha)(r-r^*)$. $\log$ denotes the natural logarithm. 

\section{The Need for Monotonicity}\label{section:validity}

To begin with, we use a numerical experiment to elaborate the monotonicity issue suffered by existing adversarial online CP algorithms. This has been overlooked in the literature to the best of our knowledge, as all the existing approaches we are aware of require fixing a single target confidence level $\alpha$ at the beginning of the CP game. Code is available at \url{https://github.com/zhiyuzz/Bayesian-Conformal/blob/main/QuantilePrediction_IID.ipynb}.

Specifically, we consider two baselines, \emph{Online Gradient Descent} (OGD) from \citep{gibbs2021adaptive}, and \emph{MultiValid Prediction} (MVP) from \citep{bastani2022practical}. To support multiple confidence level queries, we adopt the following nearest-neighbor routing on top of their independent copies. It consists of three steps: 
\begin{enumerate}
\item Evenly discretize the $[0,1]$ interval of possible confidence levels using a grid $\tilde A$.
\item For each $\tilde \alpha\in\tilde A$, maintain a copy of the considered baseline (OGD or MVP) targeting $\tilde\alpha$.
\item When queried with $\alpha$, follow the output of the base algorithm (from Step 2) targeting the nearest neighbor of $\alpha$ in $\tilde A$. 
\end{enumerate}
Initiated with OGD and MVP, the resulting algorithms are named as MultiOGD and MultiMVP respectively. 

In the experiment, we fix $R=1$. The true score sequence $r^*_{1:T}$ is sampled iid from the uniform distribution on $[0,1]$, and we evaluate the thresholds $r_{1:T}(\alpha)$ predicted by different CP algorithms, under different $\alpha$ values. For each OGD baseline targeting $\tilde \alpha$, we use the standard learning rate $\eta_t=t^{-1/2}$, and initialize it with $r_1(\tilde \alpha)=\tilde\alpha$. The base MVP algorithms are all initialized at 0 following \citep{bastani2022practical}. The point is that the initialization of MultiOGD and MultiMVP cannot cause any monotonicity violation. For comparison, we also test ERM as well as our Bayesian algorithm to be introduced in Section~\ref{section:main}.

\begin{figure}[ht]
    \centering
    \begin{minipage}{0.24\textwidth}
        \centering
        \includegraphics[width=\textwidth]{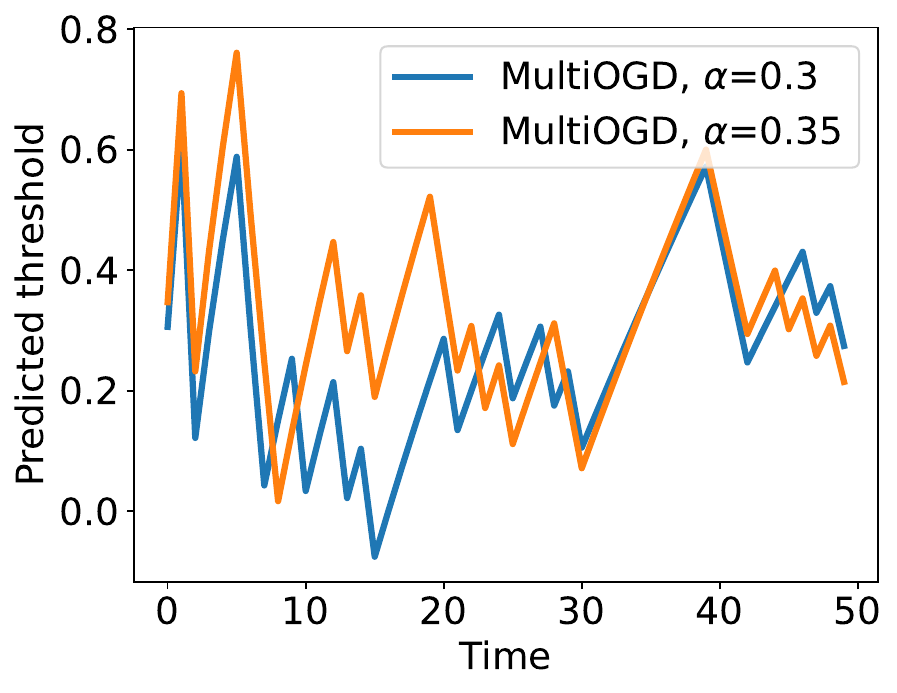}
    \end{minipage}
    \hfill
    \begin{minipage}{0.24\textwidth}
        \centering
        \includegraphics[width=\textwidth]{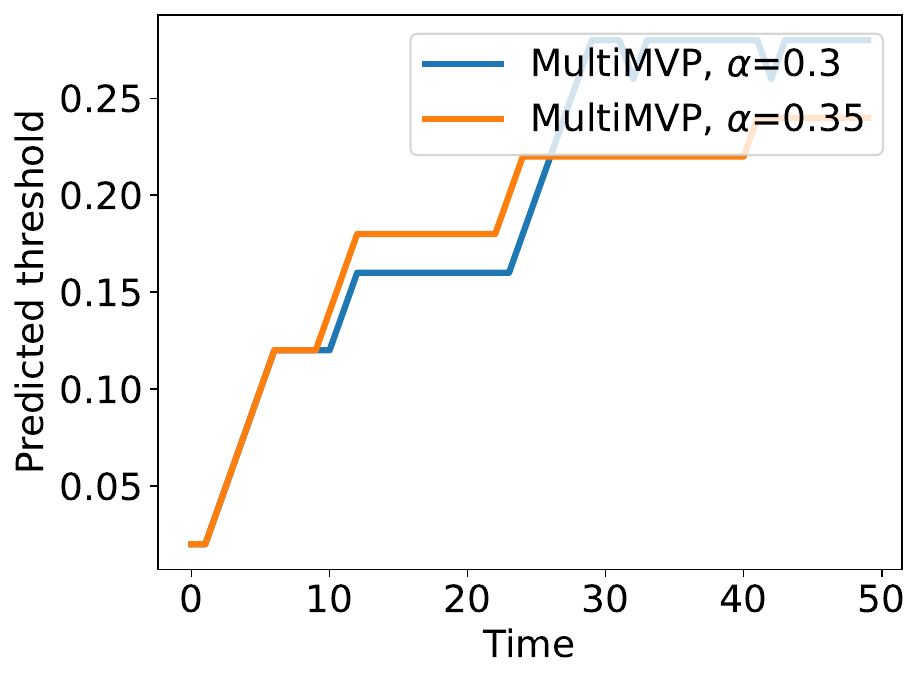}
    \end{minipage}
    \hfill
    \begin{minipage}{0.24\textwidth}
        \centering
        \includegraphics[width=\textwidth]{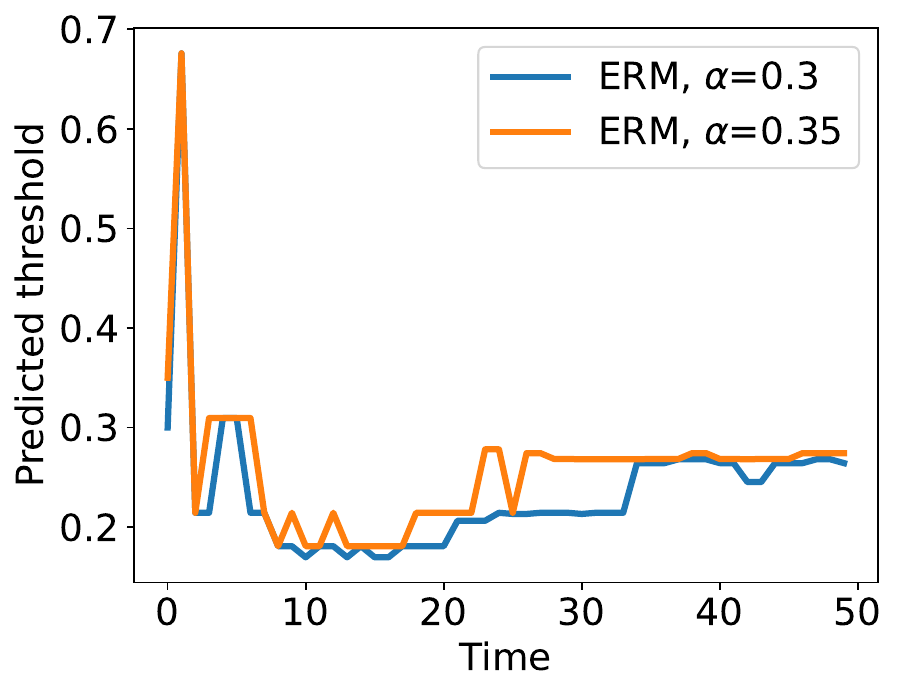}
    \end{minipage}
    \hfill
    \begin{minipage}{0.24\textwidth}
        \centering
        \includegraphics[width=\textwidth]{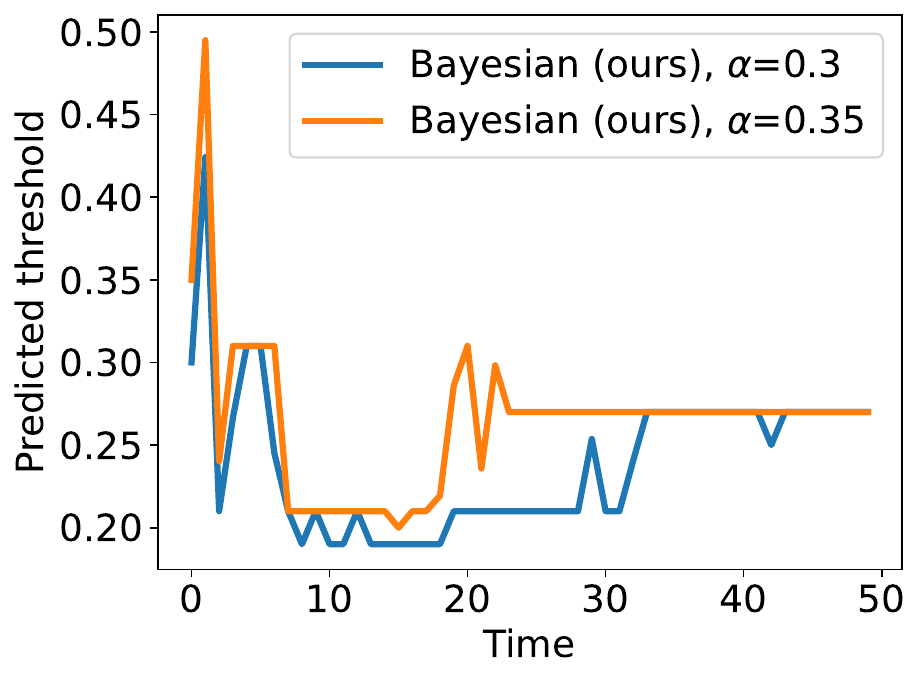}
    \end{minipage}

    \begin{minipage}{0.24\textwidth}
        \centering
        \includegraphics[width=\textwidth]{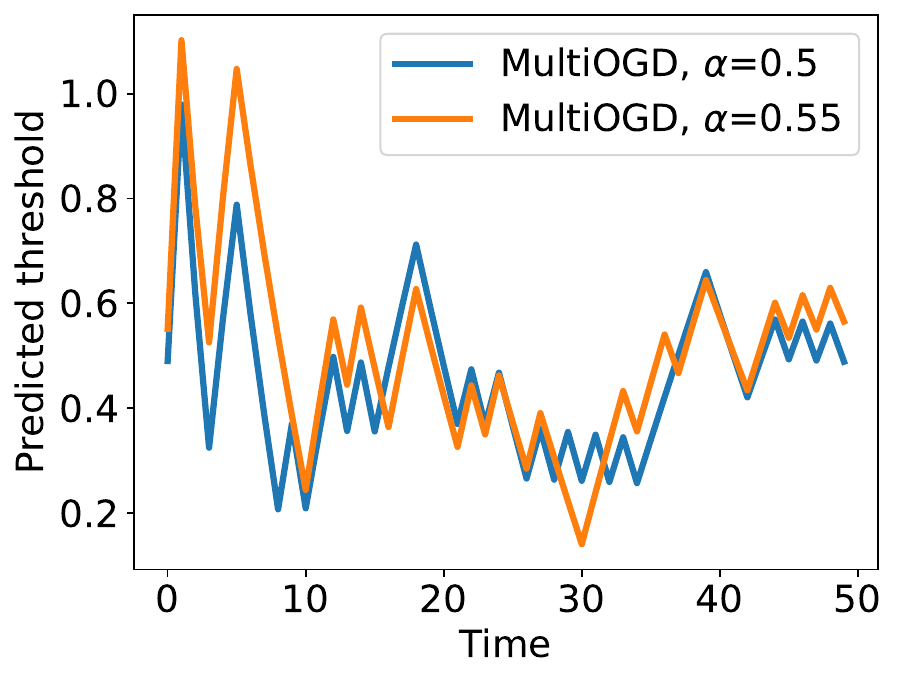}
    \end{minipage}
    \hfill
    \begin{minipage}{0.24\textwidth}
        \centering
        \includegraphics[width=\textwidth]{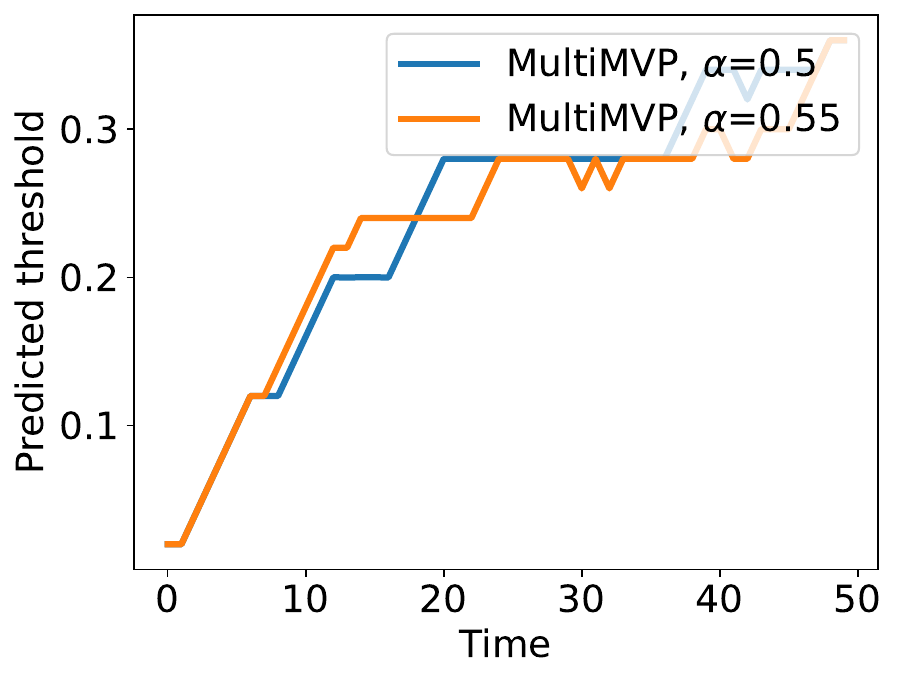}
    \end{minipage}
    \hfill
    \begin{minipage}{0.24\textwidth}
        \centering
        \includegraphics[width=\textwidth]{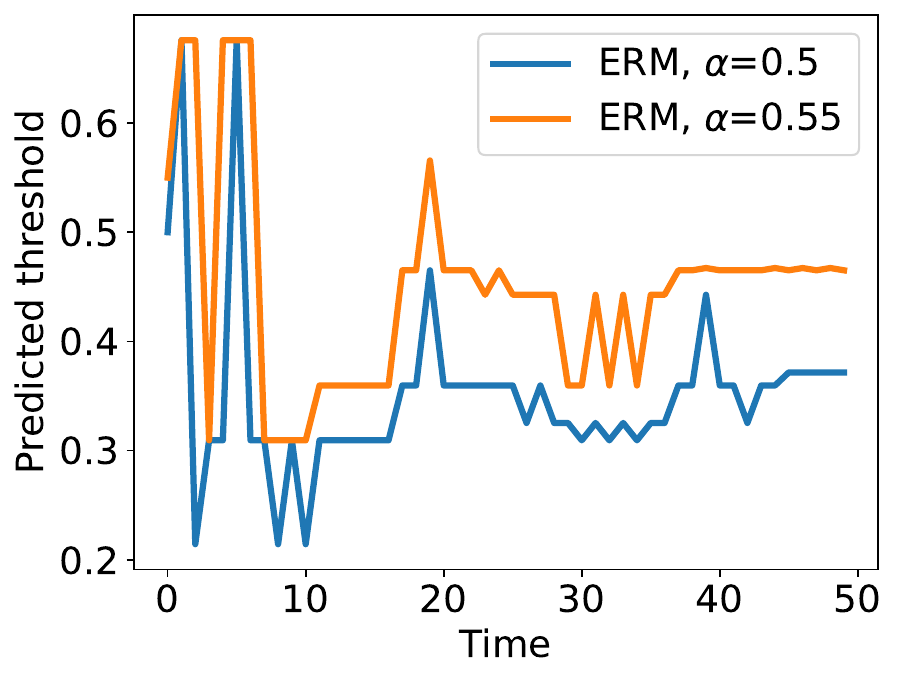}
    \end{minipage}
    \hfill
    \begin{minipage}{0.24\textwidth}
        \centering
        \includegraphics[width=\textwidth]{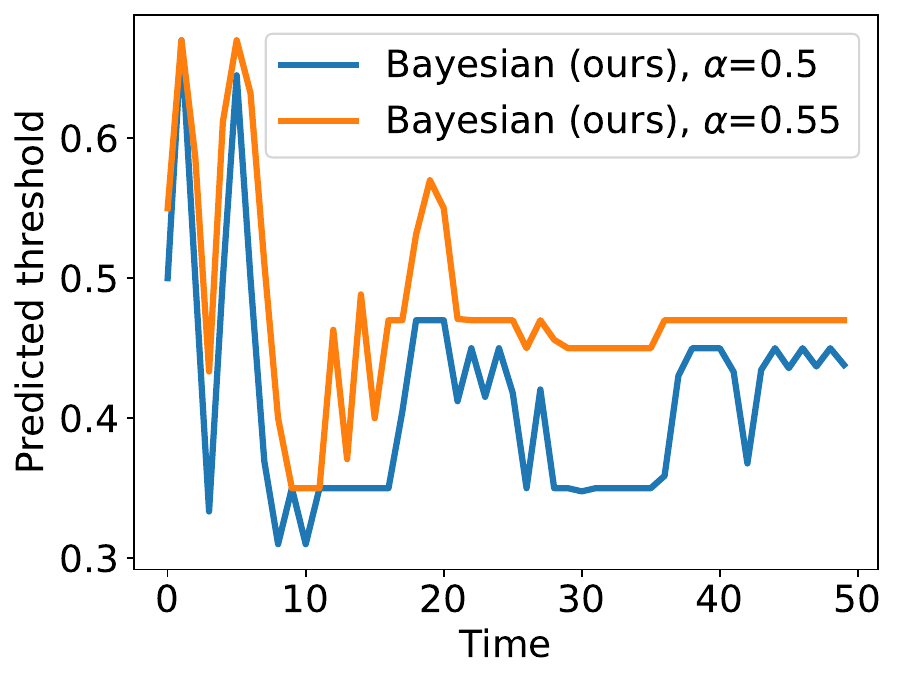}
    \end{minipage}
    \caption{Evaluating the monotonicity of threshold predictions. Ideally the orange line should be always above the blue line, since the associated target confidence level is higher. Columns correspond to different algorithms; rows correspond to different confidence level pairs.}
    \label{fig:monotonicity}
\end{figure}

The results are visualized in Figure~\ref{fig:monotonicity}. Ideally, in all the figures the orange line should be always above the blue line (i.e., the predicted confidence set due to Eq.(\ref{eq:cp}) is larger), since the associated confidence level $\alpha$ is higher. Unlike ERM and our algorithm, both MultiOGD and MultiMVP violate this property, which considerably weakens their trustworthiness to downstream users. We remark that although the data generation mechanism and the MultiMVP baseline are both randomized, a single random seed is used in this experiment to demonstrate the existence of the issue. 

\section{Algorithm and Analysis}\label{section:main}

In light of the above, our main contribution is Algorithm~\ref{alg:main}, a remarkably simple Bayesian analogue of the ERM approach. Let $P_0$ be an arbitrary distribution on $[0,R]$ called a \emph{prior}, with a strictly positive density function $p_0$. In each round, Algorithm~\ref{alg:main} maintains an algorithmic belief $P_t$ by mixing $P_0$ with the empirical distribution of the previous true scores, $\bar P(r^*_{1:t-1})$. Then, given each queried confidence level $\alpha$, it picks $r_t(\alpha)=q_{\alpha}(P_t)$ just like the ERM approach. We note that since $P_t$ does not depend on any specific $\alpha$, Algorithm~\ref{alg:main} can naturally support multiple arbitrary confidence level queries online. By construction, it is also clear that for any $\alpha_1<\alpha_2$ we always have $r_t(\alpha_1)\leq r_t(\alpha_2)$. That is, Algorithm~\ref{alg:main} does not suffer from the aforementioned monotonicity issue.

\begin{algorithm}[ht]
\caption{Online CP with regularized belief.\label{alg:main}}
\begin{algorithmic}[1]
\REQUIRE Step sizes $\{\lambda_t\}_{t\in\N_+}$, where $\lambda_1=1$, and $0<\lambda_t<1$ for all $t\geq 2$. A distribution $P_0$ on $[0:R]$ called a prior, with strictly positive density function $p_0$. 
\FOR{$t=1,2,\ldots$}
\STATE Compute the empirical distribution $\bar P (r^*_{1:t-1})$, and set the algorithmic belief $P_t$ to
\begin{equation}\label{eq:belief}
P_t=\lambda_tP_0+(1-\lambda_t)\bar P (r^*_{1:t-1}).
\end{equation}

\FOR{$\alpha\in A_t$}
\STATE Output the score threshold $r_t(\alpha)=q_{\alpha}(P_t)$.
\ENDFOR
\STATE Observe the true score $r^*_t$. 
\ENDFOR
\end{algorithmic}
\end{algorithm}

To proceed, the first two subsections respectively analyze Algorithm~\ref{alg:main} in the adversarial and the iid setting. They are followed by two extensions. 

\subsection{Analysis: Adversarial Setting}\label{subsection:adversarial}

Without any statistical assumption, we view Algorithm~\ref{alg:main} from the perspective of adversarial online learning. The key observation is that the mixture version of $P_t$ induces \emph{downstream regularizations} on the predicted threshold $r_t(\alpha)$, simultaneously for all $\alpha\in[0,1]$. Furthermore, with the uniform prior, the downstream regularizer becomes simply the quadratic function due to the structure of the quantile loss. 

Concretely, we define $\psi(r)\defeq \E_{r^*\sim P_0}[l_\alpha(r,r^*)]$. 
\begin{restatable}{theorem}{mainthm}\label{thm:main}
For all $\alpha\in[0,1]$, the output $r_t(\alpha)$ of Algorithm~\ref{alg:main} satisfies $r_1(\alpha)=\argmin_{r\in\R}\psi(r)$, and $\forall t\geq 2$,
\begin{equation}\label{eq:ftrl}
r_t(\alpha)=\argmin_{r\in\R}\spar{\frac{\lambda_t(t-1)}{1-\lambda_t}\psi(r)+\sum_{i=1}^{t-1}l_\alpha(r,r^*_i)}.
\end{equation}
Specifically,
\begin{itemize}
\item $\psi$ is strongly convex with coefficient $\inf_{r\in[0,R]}p_0(r)$, if the latter is positive.
\item If $P_0$ is the uniform distribution on $[0,R]$, then $\psi$ is a quadratic function centered at $\alpha R$,
\begin{equation*}
\psi(r)=\frac{1}{2R}r^2-\alpha r+\frac{1}{2}\alpha R.
\end{equation*}
\end{itemize}
\end{restatable}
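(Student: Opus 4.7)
The plan is to reduce the claimed FTRL identity to the well-known variational characterization of quantiles, and then compute the shape of $\psi$ by differentiating under the expectation. For any distribution $P$ on $\R$, the pinball loss $l_\alpha(\cdot, r^*)$ is convex in $r$ with subgradient $\bm{1}[r\geq r^*]-\alpha$, so (after interchanging sum and subdifferential) the set $\argmin_{r\in\R}\E_{r^*\sim P}[l_\alpha(r,r^*)]$ coincides with the set of $\alpha$-quantiles of $P$ and in particular contains $q_\alpha(P)$. Applying this to the mixture $P_t=\lambda_t P_0+(1-\lambda_t)\bar P(r^*_{1:t-1})$ and using linearity of expectation gives, for $t\geq 2$,
\[
r_t(\alpha)\in\argmin_{r\in\R}\left[\lambda_t\psi(r)+\frac{1-\lambda_t}{t-1}\sum_{i=1}^{t-1}l_\alpha(r,r^*_i)\right].
\]
Multiplying the objective by the positive constant $\frac{t-1}{1-\lambda_t}$ leaves the argmin unchanged and recovers Eq.~(\ref{eq:ftrl}). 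The case $t=1$ is immediate: $\lambda_1=1$ forces $P_1=P_0$, so $r_1(\alpha)=q_\alpha(P_0)=\argmin_r\psi(r)$. Uniqueness of the argmin (turning $\in$ into $=$) will follow from the strict convexity of $\psi$ established next.

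For the second bullet, I would differentiate $\psi(r)=\E_{r^*\sim P_0}[l_\alpha(r,r^*)]$ under the expectation (justified by the boundedness of the subgradient and the existence of the prior density), obtaining $\psi'(r)=F_0(r)-\alpha$, where $F_0$ is the CDF of $P_0$. Hence $\psi''(r)=p_0(r)$, which immediately yields strong convexity on $[0,R]$ with coefficient $\inf_{r\in[0,R]}p_0(r)$ whenever this is positive. Specializing to the uniform prior on $[0,R]$ gives $\psi'(r)=r/R-\alpha$ for $r\in[0,R]$; integrating this and fixing the constant by evaluating $\psi(0)=\E_{r^*\sim P_0}[\alpha r^*]=\alpha R/2$ reproduces exactly the quadratic expression stated in the theorem.

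The main subtlety will be making the subgradient/quantile step airtight at the discontinuity $r=r^*$: the pointwise subdifferential of $l_\alpha(\cdot,r^*)$ there is the full interval $[-\alpha,1-\alpha]$, so I need to argue that after averaging against the mixture $P_t$, the only remaining ambiguity is confined to flat regions of the CDF and is consistently resolved by the standard tie-breaking convention used to define $q_\alpha(\cdot)$. I also need a brief coercivity remark to justify writing ``$\argmin$ over $\R$'' rather than over $[0,R]$: because $P_0$ has strictly positive density on $[0,R]$, $\psi$ grows linearly as $|r|\to\infty$ whenever $\alpha\in(0,1)$, so the regularized objective in Eq.~(\ref{eq:ftrl}) is coercive and its unique minimizer is guaranteed to lie in a compact interval.
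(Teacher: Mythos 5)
Your proposal is correct and follows essentially the same route as the paper's proof: both decompose $\E_{r^*\sim P_t}[l_\alpha(\cdot,r^*)]$ linearly into the prior term $\lambda_t\psi$ and the empirical term, and both derive $\psi'(r)=F_0(r)-\alpha$, $\psi''(r)=p_0(r)$ (with the uniform specialization obtained by either direct integration or by integrating $\psi'$ and matching $\psi(0)=\alpha R/2$, as you do). The only presentational difference is that you invoke the variational characterization ``the $\alpha$-quantile minimizes expected pinball loss'' as a known lemma, whereas the paper re-derives that fact on the spot by computing the right-derivative $F'_{t,+}$ of the regularized objective and matching it against the CDF-based formula for $q_\alpha(P_t)$; the subtleties you flag (the set-valued subdifferential at $r=r^*$, coercivity over $\R$) are exactly what the paper's explicit $F'_{t,+}$ computation and strict-convexity remark handle, so your plan would close in the same way.
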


Theorem~\ref{thm:main} shows that despite not knowing $\alpha$ at the beginning of the CP game, Algorithm~\ref{alg:main} generates the same output $r_t(\alpha)$ as a non-linearized \emph{Follow the Regularized Leader} (FTRL) algorithm on the quantile loss $l_\alpha$. Specifically, Eq.(\ref{eq:ftrl}) can be compared to the FTL-equivalence of the iid-based approach, Eq.(\ref{eq:erm}). The important difference is the additional regularizer $\psi(r)$. 

To provide more context here: FTRL is a standard improvement of ERM / FTL in adversarial online learning, with better stability and worst-case performance on ``difficult loss functions''. Our analysis involves the non-linearized version of FTRL, which has previously received less attention than its linearized counterpart. This is largely due to computational reasons, since non-linearized FTRL has to solve a convex optimization subroutine in each round, whereas linearized FTRL admits closed-form solutions \citep[Chapter~7.3]{orabona2023modern}. From this perspective, a notable novelty of our result is showing that for a class of benign regularizers, non-linearized FTRL on quantile losses can be simulated by a simple and efficient procedure. 

From Theorem~\ref{thm:main}, we can then obtain the regret bound of Algorithm~\ref{alg:main} using the standard FTRL analysis. In order to demonstrate the role of good priors, the strong convexity of the regularizer $\psi$ will be measured locally. 

\begin{restatable}{theorem}{regbound}\label{thm:regret}
Let $\mu_{t,\alpha}\defeq\inf\{p_0(r):r_t(\alpha)\wedge r^*_t\leq r\leq r_t(\alpha)\vee r^*_t\}$. With the step size $\lambda_t=1/\sqrt{t}$, Algorithm~\ref{alg:main} guarantees
\begin{equation}
\reg_T(\alpha)\defeq\sum_{t=1}^Tl_\alpha(r_t(\alpha),r^*_t)-\sum_{t=1}^Tl_\alpha(q_{\alpha}(r^*_{1:T}),r^*_t)=O\rpar{\psi(q_\alpha(r^*_{1:T}))\sqrt{T}+\sum_{t=1}^T\frac{1}{\mu_{t,\alpha}\sqrt{t}}},\label{eq:regret_full}
\end{equation}
for all time horizon $T$, true score sequence $r^*_{1:T}$, and confidence level $\alpha\in[0,1]$. Furthermore, if the prior $P_0$ is the uniform distribution on $[0,R]$, then
\begin{equation*}
\reg_T(\alpha)=O(R\sqrt{T}). 
\end{equation*}
\end{restatable}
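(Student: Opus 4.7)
The plan is to combine the FTRL equivalence from Theorem~\ref{thm:main} with a direct CDF analysis of the per-round quantile displacement. With $\lambda_t = 1/\sqrt{t}$, Theorem~\ref{thm:main} writes $r_t(\alpha) = \arg\min_r F_t(r)$ where $F_t(r) = R_t(r) + \sum_{s<t} l_\alpha(r, r^*_s)$, $R_t(r) \defeq \eta_t^{-1} \psi(r)$, and $\eta_t^{-1} = \lambda_t(t-1)/(1-\lambda_t) = \sqrt{t}+1$. Because $l_\alpha \geq 0$ forces $\psi \geq 0$, the regularizers are pointwise non-decreasing in $t$; this is precisely the hypothesis of the classical ``Be-The-Leader'' induction for FTRL, which yields
\begin{equation*}
\reg_T(\alpha) \leq R_{T+1}(q_\alpha(r^*_{1:T})) + \sum_{t=1}^T \bigl[l_\alpha(r_t(\alpha), r^*_t) - l_\alpha(r_{t+1}(\alpha), r^*_t)\bigr] + O(1).
\end{equation*}
The first term is immediately $O(\psi(q_\alpha(r^*_{1:T}))\sqrt{T})$, matching the first summand of Eq.(\ref{eq:regret_full}).

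For the stability sum, the $1$-Lipschitzness of $l_\alpha(\cdot, r^*)$ reduces each summand to $|r_t(\alpha) - r_{t+1}(\alpha)|$. To control this displacement I would work directly with the CDFs of the mixed beliefs: since $F_{P_t}(r_t(\alpha)) = \alpha = F_{P_{t+1}}(r_{t+1}(\alpha))$, the displacement is bounded by the perturbation $|F_{P_{t+1}}(r_t(\alpha)) - F_{P_t}(r_t(\alpha))|$ divided by the density of $P_{t+1}$ on the segment connecting the two quantiles. A direct expansion of $P_{t+1} - P_t$ shows this perturbation is of order $1/t$, attributable to inserting one new datum $r^*_t$ into $\bar P(\cdot)$ with an $O(t^{-3/2})$ correction from updating $\lambda_t$ to $\lambda_{t+1}$. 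The density of $P_{t+1}$ is at least $\lambda_{t+1} p_0 \geq \mu_{t,\alpha}/\sqrt{t+1}$ on the interval $[r_t(\alpha) \wedge r^*_t, r_t(\alpha) \vee r^*_t]$. Dividing perturbation by density yields $|r_t(\alpha) - r_{t+1}(\alpha)| = O(1/(\mu_{t,\alpha}\sqrt{t}))$, and summing over $t$ reproduces the second summand of Eq.(\ref{eq:regret_full}).

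Specializing to the uniform prior, Theorem~\ref{thm:main} gives $0 \leq \psi(r) \leq R$ on $[0,R]$ and $p_0 \equiv 1/R$, so $\mu_{t,\alpha} = 1/R$ uniformly. Both summands in Eq.(\ref{eq:regret_full}) then collapse to $O(R\sqrt{T})$, delivering the sharp rate.

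The main obstacle is justifying that the density lower bound may be restricted to the stated interval $[r_t(\alpha) \wedge r^*_t, r_t(\alpha) \vee r^*_t]$ rather than the a priori broader segment $[r_t(\alpha) \wedge r_{t+1}(\alpha), r_t(\alpha) \vee r_{t+1}(\alpha)]$. The intuition is a monotone-shift property: mixing a point mass at $r^*_t$ into $\bar P(\cdot)$ can only pull the $\alpha$-quantile toward $r^*_t$, so $r_{t+1}(\alpha)$ should lie between $r_t(\alpha)$ and $r^*_t$. This is exact when $\lambda_t$ is held fixed; the time variation in $\lambda_t$ introduces an $O(t^{-3/2})$ perturbation that may push $r_{t+1}(\alpha)$ slightly outside the stated interval, producing an $O(1/(\mu_{t,\alpha} t))$ slack which is a lower-order contribution. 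Making this monotone-shift argument precise, and bookkeeping the small $\lambda_t$-induced slack, is the one non-trivial computation in the proof.
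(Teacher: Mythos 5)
Your proposal is correct in outline but takes a genuinely different route from the paper after the FTRL reduction. The paper also starts from Theorem~\ref{thm:main} and the increasing weight $h_t=\lambda_t(t-1)/(1-\lambda_t)$, and uses the FTRL regret equality to reduce to bounding $F_t(r_t)+l_\alpha(r_t,r^*_t)-F_t(r_{t+1})-l_\alpha(r_{t+1},r^*_t)$. But rather than Lipschitz-bounding this by $|r_t-r_{t+1}|$ and then chasing a CDF-displacement estimate as you do, the paper relaxes further to $F_t(r_t)+l_\alpha(r_t,r^*_t)-\min_r[F_t(r)+l_\alpha(r,r^*_t)]$ and invokes the standard strong-convexity lemma (\cite[Lemma~7.6]{orabona2023modern}). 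The auxiliary minimizer $\tilde r_t\defeq\argmin_r[F_t(r)+l_\alpha(r,r^*_t)]$ is the key device: because $F_t$ is minimized at $r_t$ and $l_\alpha(\cdot,r^*_t)$ is minimized at $r^*_t$, and both are convex, $\tilde r_t$ lies in $[r_t\wedge r^*_t,\,r_t\vee r^*_t]$ \emph{by construction}, so the local density lower bound $\mu_{t,\alpha}$ (and hence $h_t\mu_{t,\alpha}$-strong-convexity of $F_t$) applies exactly on the segment needed. Together with $|g_t|\leq 1$ from $1$-Lipschitzness of $l_\alpha$, this gives the one-step bound $1/(2h_t\mu_{t,\alpha})$ cleanly.

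This is precisely where your route is more delicate: you need $r_{t+1}$ (the \emph{next-round} FTRL minimizer, whose objective additionally shifts the regularizer weight from $h_t$ to $h_{t+1}$), rather than $\tilde r_t$, to stay in $[r_t\wedge r^*_t,\,r_t\vee r^*_t]$. Your monotone-shift argument is correct when $\lambda$ is frozen, and for the choice $\lambda_t=1/\sqrt{t}$ one can in fact check (for $t$ beyond a small constant) that the coefficient of the empirical CDF in $F_{t+1}-F_t$ is also non-positive, so the containment survives the $\lambda$-drift; but establishing this rigorously, plus bookkeeping jump discontinuities of the mixed CDF when you equate $F_{P_t}(r_t)\approx\alpha\approx F_{P_{t+1}}(r_{t+1})$, adds real work that the paper's auxiliary-minimizer trick sidesteps entirely. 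Your step-1 and step-3 (monotonicity of $h_t$, the uniform-prior specialization with $\psi\leq R/2$ and $\mu_{t,\alpha}=1/R$) coincide with the paper's. If you tighten the monotone-shift verification under time-varying $\lambda_t$, your proof goes through and offers a more ``hands-on'' probabilistic picture of why the regret is small; the paper's version is shorter and hews closer to textbook FTRL machinery.
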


Let us interpret this regret bound. Suppose the time horizon $T$ and the empirical true score distribution $\bar P(r^*_{1:T})$ are known beforehand (but the exact $r^*_{1:T}$ sequence is unknown), then for all $\alpha$, a very natural strategy is to predict $r_t(\alpha)=q_{\alpha}(r^*_{1:T})$. Theorem~\ref{thm:regret} shows that without any statistical assumption, Algorithm~\ref{alg:main} with the uniform prior asymptotically performs as well as this oracle in terms of the total quantile loss, and the $O(R\sqrt{T})$ regret bound is known to be tight \citep{hazan2023introduction,orabona2023modern}. Existing first-order optimization baselines are equipped with regret bounds of a similar type \citep{bhatnagar2023improved,gibbs2024conformal,zhang2024discounted}, but the difference is that they require knowing the confidence level $\alpha$ beforehand, whereas Algorithm~\ref{alg:main} achieves low regret simultaneously for all $\alpha\in[0,1]$.

\paragraph{The role of good prior} 
A strength of Theorem~\ref{thm:regret} is that the $O(R\sqrt{T})$ regret bound only requires $P_0$ being the uniform distribution. Nonetheless, 
if one has extra prior knowledge on the environment, picking a more sophisticated prior can indeed bring advantages. To see this, notice that the function $\psi$ in Eq.(\ref{eq:regret_full}) is minimized at $q_\alpha(P_0)$, therefore ideally we would aim for $P_0\approx \bar P(r^*_{1:T})$, which means $q_\alpha(P_0)\approx q_\alpha(r^*_{1:T})$ for all $\alpha$. But unlike the discrete distribution $\bar P(r^*_{1:T})$, $P_0$ also needs to have a ``positive enough'' density function, as otherwise the second term in Eq.(\ref{eq:regret_full}) would blow up.

\paragraph{Coverage frequency error} Existing first-order optimization baselines \citep{bhatnagar2023improved,gibbs2024conformal,zhang2024discounted} are equipped with both a regret bound and a coverage frequency error bound, Eq.(\ref{eq:coverage}). Hoping to challenge this convention, here we discuss the advantages of only considering the regret.

First, the coverage frequency error is fundamentally ``iterate-centric'', whereas an ideal performance metric needs to be ``data-centric''. Consider the CP interaction protocol displayed in Figure~\ref{fig:Diagram}: achieving low coverage frequency error requires the CP algorithm's output to depend not only on the top level (the nature and the base model), but also on the users' previous confidence level queries. This is in contrast with our regret minimization algorithm, whose output is independent of the users' query history. 

Furthermore, just like the pathological example given by \citep{bastani2022practical}, first-order optimization baselines essentially achieve the desirable coverage frequency due to the ``overshooting'' provided by the loss linearization. This is perhaps clear from the first online CP algorithm (ACI) proposed by \citep{gibbs2021adaptive}: regarding the update Eq.(\ref{eq:first_order_update}) with the constant learning rate $\eta_t=\eta$, it is shown that the coverage frequency error monotonically decreases as $\eta\rightarrow\infty$. Such a peculiar behavior results precisely from overshooting: if $\alpha=90\%$, then a failed coverage needs nine successful coverages to compensate, and \emph{ensuring} this does not have much to do with the observed data. This casts some natural doubt on the coverage frequency error that the algorithm is designed to optimize.

\paragraph{Relation to U-calibration} Our results so far deviate from the common scope of online learning, which requires specifying a single loss function in each round. Instead, they have a similar flavor as \emph{U-calibration} \citep{kleinberg2023u,luo2024optimal}: forecasting for an unknown downstream agent. Prior works on U-calibration considered the setting of \emph{finite-class distributional prediction} with generic proper losses, while our paper focuses on the continuous domain $[0,R]$ with the more specific quantile losses. The extra problem structure allows our algorithm to be deterministic (rather than being randomized like FTPL), thus establishing a closer connection to typical deterministic algorithms in \emph{online convex optimization}. 

\subsection{Analysis: IID Setting}

In practice, a CP algorithm is often applied without knowing the characteristics of the nature. We have been focusing on the adversarial setting, but what if the true scores $r^*_{1:T}$ turn out to be iid? We now demonstrate the \emph{adaptivity} of Algorithm~\ref{alg:main}: it automatically achieves almost the same guarantees as ERM under the additional iid assumption.

First, as the coverage probability becomes the default performance metric in the iid setting, we present the following bound on the \emph{dataset-conditional coverage probability}. Notice that the event of successful coverage can be expressed as $r^*_t\leq r_t(\alpha)$, where $r_t(\alpha)$ is determined by the past true scores $r^*_{1:t-1}$ and the queried $\alpha$. 

\begin{restatable}{theorem}{coverage}\label{thm:coverage}
Assume the true score sequence $r^*_1,r^*_2,\ldots$ is drawn iid from an unknown continuous distribution $\mathcal{D}$. With the step size $\lambda_t=1/\sqrt{t}$ and an arbitrary prior $P_0$, Algorithm~\ref{alg:main} guarantees that for any fixed $t\geq 2$, with probability at least $1-\delta$ over the randomness of $r^*_{1:t-1}$, we have for all $\alpha\in[0,1]$, 
\begin{equation*}
\alpha-\sqrt{\frac{\log(2/\delta)}{2(t-1)}}-\frac{1}{\sqrt{t}-1}\leq\P_{r^*_t\sim\mathcal{D}}\spar{r^*_t\leq r_t(\alpha)}
\leq \alpha+\sqrt{\frac{\log(2/\delta)}{2(t-1)}}+\frac{1}{\sqrt{t}-1}+\frac{1}{t-1}.
\end{equation*}
\end{restatable}

Compared to the analogous result for ERM \citep[Theorem~34]{roth2022uncertain}, the difference here due to the regularization is the $(\sqrt{t}-1)^{-1}$ factor, which is dominated by the existing $O(\sqrt{t^{-1}\log\delta^{-1}})$ term resulting from randomness. That is, Algorithm~\ref{alg:main} achieves almost the same dataset-conditional coverage probability error as Split Conformal, despite being agnostic to the iid assumption. 

Besides the coverage probability, we can also analyze the \emph{excess quantile risk} of Algorithm~\ref{alg:main}, which matches the standard oracle inequality one would obtain using ERM. 

\begin{restatable}{theorem}{risk}\label{thm:risk}
Assume the true score sequence $r^*_1,r^*_2,\ldots$ is drawn iid from an unknown distribution $\mathcal{D}$. With the step size $\lambda_t=1/\sqrt{t}$ and an arbitrary prior $P_0$, Algorithm~\ref{alg:main} guarantees that for any fixed $t\geq 2$, with probability at least $1-\delta$ over the randomness of $r^*_{1:t-1}$, we have for all $\alpha\in[0,1]$, 
\begin{equation*}
\E_{r^*_t\sim\mathcal{D}}[l_\alpha(r_t(\alpha),r^*_t)]
\leq
\min_{r\in[0,R]}\E_{r^*_t\sim\mathcal{D}}[l_\alpha(r,r^*_t)]+O\rpar{R\sqrt{\frac{\log(1/\delta)}{t}}}.
\end{equation*}
\end{restatable}

\paragraph{Bayesian interpretation} We have been calling Algorithm~\ref{alg:main} ``Bayesian''. Following \citep[Chapter~23]{gelman2021bayesian}, we now make this important connection concrete. 

Consider the following distribution estimation problem: given $x_1,\ldots,x_n\in\calX$ sampled iid from an unknown distribution $X$, what is a good estimate of $X$? As opposed to the frequentist estimate $\bar P(x_{1:n})$, a Bayesian estimator would place a prior $F_0$ over all distributions supported on the domain $\calX$, compute the posterior $F_n$ from the samples, and output the mean $\E[F_n]$. 

For analytical convenience, one would typically choose $F_0$ as a \emph{conjugate prior}: it refers to a family of priors such that if $F_0$ belongs to this family, then $F_n$ also belongs to this family. The most notable conjugate prior for distribution estimation is the \emph{Dirichlet process} (DP), denoted as $\mathrm{DP}(\alpha,P_0)$. Here $\alpha$ and $P_0$ are hyperparameters: $P_0$ equals the mean $\E[\mathrm{DP}(\alpha,P_0)]$, while $\alpha$ controls the variance of $\mathrm{DP}(\alpha,P_0)$. Due to the conjugacy, if $F_0=\mathrm{DP}(\alpha,P_0)$, then
\begin{equation*}
F_n=\mathrm{DP}\rpar{\alpha+n,\frac{\alpha}{\alpha+n}P_0+\frac{n}{\alpha+n}\bar P(x_{1:n})}.
\end{equation*}
Consequently, the Bayesian estimator of the distribution $X$ is
\begin{equation*}
\E[F_n]=\frac{\alpha}{\alpha+n}P_0+\frac{n}{\alpha+n}\bar P(x_{1:n}).
\end{equation*}
This is the same as the belief update Eq.(\ref{eq:belief}) in Algorithm~\ref{alg:main}, with the hyperparameter $\lambda_t=\alpha/(\alpha+n)$. Our work can thus be regarded as the analysis of a Bayesian algorithm \emph{with and without} the traditional iid assumption. 

Back to the adversarial setting, we conclude this paper with two extensions of Algorithm~\ref{alg:main}. 

\subsection{Extension: Quantization}\label{subsection:quantized}

Recall our construction of MultiOGD from Section~\ref{section:validity}. Although not studied by existing works, it is not hard to see that with the size of the grid $\tilde A$ being $O(\sqrt{T})$, MultiOGD also satisfies the same $\alpha$-agnostic $O(R\sqrt{T})$ regret bound as in Theorem~\ref{thm:regret}, since the quantile loss $l_\alpha(r,r^*)$ is $R$-Lipschitz with respect to $\alpha$. This raises a natural question: Algorithm~\ref{alg:main} requires $O(T)$ memory due to storing the empirical distribution of previous true scores -- can we reduce it to $O(\sqrt{T})$? 

\paragraph{Quantized algorithm} Here is a variant of Algorithm~\ref{alg:main}, denoted as \textsc{Quantized}, achieving this goal. The idea is to discretize the domain $[0,R]$ rather than the $\alpha$-space: we maintain an evenly-spaced grid of size $\sqrt{T}$ over $[0,R]$, round each observed $r^*_t$ to its nearest neighbor $\tilde r^*_t$ on the grid, and replace the belief update Eq.(\ref{eq:belief}) by
\begin{equation*}
P_t=\lambda_tP_0+(1-\lambda_t)\bar P (\tilde r^*_{1:t-1}).
\end{equation*}

The associated regret bound follows from the Lipschitzness of $l_\alpha(r,r^*)$ with respect to $r^*$. 
\begin{restatable}{theorem}{quantized}\label{thm:quantized}
With $\lambda_t=1/\sqrt{t}$ and the uniform $P_0$, \textsc{Quantized} achieves $\reg_T(\alpha)=O(R\sqrt{T})$.
\end{restatable}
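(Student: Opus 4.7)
The plan is to reduce the quantized regret bound to Theorem~\ref{thm:regret} by viewing \textsc{Quantized} as Algorithm~\ref{alg:main} instantiated on the rounded sequence $\tilde r^*_{1:T}$, and then pay the rounding error at most twice via the Lipschitz continuity of the quantile loss in its second argument.

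First I would verify the key Lipschitz observation: from the piecewise form $l_\alpha(r,r^*)=(\bm{1}[r\geq r^*]-\alpha)(r-r^*)$, direct differentiation in the second argument gives $|\partial_{r^*} l_\alpha(r,r^*)|\leq\max(\alpha,1-\alpha)\leq 1$, so $l_\alpha(r,\cdot)$ is $1$-Lipschitz on $[0,R]$. Since an evenly spaced grid of size $\sqrt{T}$ on $[0,R]$ has spacing $O(R/\sqrt{T})$, the nearest-neighbor rounding error obeys $|r^*_t-\tilde r^*_t|=O(R/\sqrt{T})$, whence $|l_\alpha(r,r^*_t)-l_\alpha(r,\tilde r^*_t)|=O(R/\sqrt{T})$ uniformly in $r$ and $t$.

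Next, since \textsc{Quantized} differs from Algorithm~\ref{alg:main} only in feeding $\tilde r^*_t$ into the belief update in place of $r^*_t$, it coincides with Algorithm~\ref{alg:main} executed on the input sequence $\tilde r^*_{1:T}\in[0,R]^T$. Applying Theorem~\ref{thm:regret} with $\lambda_t=1/\sqrt{t}$ and uniform $P_0$ to this sequence delivers
\begin{equation*}
\sum_{t=1}^T l_\alpha(r_t(\alpha),\tilde r^*_t)-\sum_{t=1}^T l_\alpha(q_\alpha(\tilde r^*_{1:T}),\tilde r^*_t)=O(R\sqrt{T}).
\end{equation*}

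To close, I would chain four inequalities: (i) $\sum_t l_\alpha(r_t(\alpha),r^*_t)\leq\sum_t l_\alpha(r_t(\alpha),\tilde r^*_t)+O(R\sqrt{T})$ by Lipschitzness summed over $T$ terms; (ii) the quantized regret bound just displayed; (iii) $\sum_t l_\alpha(q_\alpha(\tilde r^*_{1:T}),\tilde r^*_t)\leq\sum_t l_\alpha(q_\alpha(r^*_{1:T}),\tilde r^*_t)$, because $q_\alpha(\tilde r^*_{1:T})\in\argmin_r\sum_t l_\alpha(r,\tilde r^*_t)$; and (iv) $\sum_t l_\alpha(q_\alpha(r^*_{1:T}),\tilde r^*_t)\leq\sum_t l_\alpha(q_\alpha(r^*_{1:T}),r^*_t)+O(R\sqrt{T})$ by Lipschitzness again. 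Summing collapses to $\reg_T(\alpha)=O(R\sqrt{T})$. The only subtle step is (iii), which swaps the ``wrong'' comparator $q_\alpha(\tilde r^*_{1:T})$ furnished by Theorem~\ref{thm:regret} for the ``right'' one $q_\alpha(r^*_{1:T})$ demanded by the definition of $\reg_T(\alpha)$; beyond this, the argument is a routine rounding calculation and I do not anticipate any serious obstacle.
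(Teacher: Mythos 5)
Your proof is correct and is essentially the same argument as the paper's: apply Theorem~\ref{thm:regret} to the rounded sequence $\tilde r^*_{1:T}$, swap the comparator $q_\alpha(\tilde r^*_{1:T})$ for $q_\alpha(r^*_{1:T})$ using the minimizer property (your step (iii), which the paper folds into the first inequality), and then pay $O(R\sqrt{T})$ twice via the $1$-Lipschitzness of $l_\alpha$ in its second argument to move from $\tilde r^*_t$ to $r^*_t$ in both the algorithm's loss and the comparator's loss. The only difference is the order in which you assemble the four inequalities, which is cosmetic.
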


Compared to MultiOGD, \textsc{Quantized} achieves the same $O(R\sqrt{T})$ regret bound with $O(\sqrt{T})$ memory, while avoiding its monotonicity issue. There is another practical advantage: after observing each $r^*_t$, MultiOGD needs to update all $\sqrt{T}$ base algorithms, whereas \textsc{Quantized} performs only one update on the algorithmic belief $\bar P_t$, and then makes $\abs{A_t}$ inferences using the prediction head. 

\subsection{Extension: Continual Distribution Shift}

Starting from \citep{gibbs2021adaptive}, the study of adversarial online CP has been largely motivated by the prevalence of continual distribution shifts in practice. Tackling this challenge requires \emph{non-converging} algorithms characterized by sliding-window performance guarantees. We finally present a discounted variant of Algorithm~\ref{alg:main}, denoted by \textsc{Discounted}, along this direction. 

\paragraph{Discounted algorithm} Let $\beta\in(0,1)$ be a \emph{discount factor}, which is a bandwidth hyperparameter required by \textsc{Discounted}. Then, we define a regularized and discounted empirical distribution of $r^*_{1:t}$ recursively by
\begin{equation*}
\bar P_\beta(r^*_1)=\beta P_0+(1-\beta)\delta(r^*_1),
\end{equation*}
\begin{equation*}
\bar P_\beta(r^*_{1:t})=\beta\bar P_\beta(r^*_{1:t-1})+(1-\beta)\delta(r^*_t)=\beta^{t}P_0+(1-\beta)\sum_{i=1}^t\beta^{t-i}\delta(r^*_i),
\end{equation*}
where $\delta(r^*_t)$ is the distribution with point mass at $r^*_t$. This is used to replace the undiscounted empirical distribution in the belief update, i.e., Eq.(\ref{eq:belief}) is replaced by
\begin{equation*}
P_t=\lambda_tP_0+(1-\lambda_t)\bar P_\beta (r^*_{1:t-1}).
\end{equation*}
After that, the prediction head remains unchanged, i.e., $r_t(\alpha)=q_\alpha(P_t)$. 

Similar to Theorem~\ref{thm:main} and \ref{thm:regret}, we can show that \textsc{Discounted} simulates the $\beta$-discounted non-linearized FTRL, which is equipped with a $\beta$-discounted regret bound. Importantly, reasonable step sizes $\lambda_t$ become constant (rather than decreasing), which emphasizes the crucial role of the prior $P_0$: instead of only using $P_0$ to regularize the beginning of the CP game, \textsc{Discounted} continually mix $P_0$ into its algorithm belief with constant weight, such that it does not ``overfit the current environment''. 

\begin{restatable}{theorem}{discounted}\label{thm:discounted}
With $\lambda_t=\lambda=\frac{\sqrt{1-\beta}}{\beta+\sqrt{1-\beta}}$ and the uniform $P_0$, the output $r_t(\alpha)$ of \textsc{Discounted} satisfies
\begin{equation*}
r_t(\alpha)=\argmin_{r\in\R}\spar{(1-\beta)^{-1}\rpar{\frac{\lambda}{1-\lambda}+\beta^{t-1}}\psi(r)
+\sum_{i=1}^{t-1}\beta^{t-1-i}l_\alpha(r,r^*_i)},
\end{equation*}
for all $\alpha$ and $t$. In addition, for all $\alpha\in[0,1]$, it guarantees the discounted regret bound
\begin{equation*}
\reg_{T,\beta}(\alpha)
\defeq \sum_{t=1}^T\beta^{T-t}l_\alpha(r_t(\alpha),r_t^*)-\min_{r\in[0,R]}\sum_{t=1}^T\beta^{T-t}l_\alpha(r,r_t^*)
\leq\frac{R}{\sqrt{1-\beta}}+o(R),
\end{equation*}
where $o(\cdot)$ is with respect to $T\rightarrow \infty$.
\end{restatable}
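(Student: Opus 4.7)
\textbf{Proof proposal for Theorem~\ref{thm:discounted}.}

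The plan is to first establish the FTRL-equivalence by direct computation, mirroring the proof of Theorem~\ref{thm:main}, and then to derive the discounted regret bound by running a discounted non-linearized FTRL analysis on top of this equivalence.

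\emph{Step 1 (FTRL equivalence).} I would plug the closed form
\[
\bar P_\beta(r^*_{1:t-1})=\beta^{t-1}P_0+(1-\beta)\sum_{i=1}^{t-1}\beta^{t-1-i}\delta(r^*_i)
\]
into the belief update, obtaining
\[
P_t=\bigl[\lambda+(1-\lambda)\beta^{t-1}\bigr]P_0+(1-\lambda)(1-\beta)\sum_{i=1}^{t-1}\beta^{t-1-i}\delta(r^*_i).
\]
Since $q_\alpha(P_t)$ minimizes $r\mapsto\E_{r^*\sim P_t}[l_\alpha(r,r^*)]$ (the standard variational characterization of the quantile under the pinball loss, already used in the proof of Theorem~\ref{thm:main}), linearity of the expectation yields
\[
r_t(\alpha)=\argmin_{r\in\R}\Bigl\{\bigl[\lambda+(1-\lambda)\beta^{t-1}\bigr]\psi(r)+(1-\lambda)(1-\beta)\sum_{i=1}^{t-1}\beta^{t-1-i}l_\alpha(r,r^*_i)\Bigr\}.
\]
Dividing through by $(1-\lambda)(1-\beta)$ and using the identity $\lambda/(1-\lambda)=\sqrt{1-\beta}/\beta$ (which comes from the choice $\lambda=\sqrt{1-\beta}/(\beta+\sqrt{1-\beta})$) reproduces the stated expression.

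\emph{Step 2 (Discounted regret bound).} Let $\eta_t\defeq(1-\beta)^{-1}\bigl[\lambda/(1-\lambda)+\beta^{t-1}\bigr]$, so that Step~1 writes $r_t$ as the minimizer of $\Phi_t(r)\defeq\eta_t\psi(r)+\sum_{i=1}^{t-1}\beta^{t-1-i}l_\alpha(r,r^*_i)$. I would run the textbook non-linearized FTRL analysis, adapted to the discounted losses: apply ``be-the-regularized-leader'' at horizon $T+1$ to compare $r^*\defeq\argmin_{r\in[0,R]}\sum_{t=1}^T\beta^{T-t}l_\alpha(r,r_t^*)$ against the trajectory $r_{1:T}$, then telescope and control the one-step stability terms $\beta^{T-t}\bigl[l_\alpha(r_t,r_t^*)-l_\alpha(r_{t+1},r_t^*)\bigr]$. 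For the uniform prior, $\psi$ is $1/R$-strongly convex and $l_\alpha$ is $1$-Lipschitz in $r$, so the FTRL objective at time $t+1$ is $\eta_{t+1}/R$-strongly convex; the usual stability lemma then gives $|r_t-r_{t+1}|=O(R/\eta_{t+1})$ and hence a per-round stability cost of $O(R/\eta_{t+1})$.

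\emph{Step 3 (Putting the bounds together).} Combining, the discounted regret is at most a ``regularizer term'' $\eta_{T+1}\cdot\sup_{r\in[0,R]}\psi(r)=O(R\eta_{T+1})$ and a ``stability term'' $\sum_{t=1}^T\beta^{T-t}\cdot O(R/\eta_{t+1})$. As $t\to\infty$, $\eta_t\to\eta_\infty=1/(\beta\sqrt{1-\beta})$, so $\eta_{T+1}=\eta_\infty+O(\beta^T)$ and the stability term equals $O(R\eta_\infty^{-1})\sum_{t=1}^T\beta^{T-t}+o(1)=O(R\cdot\beta\sqrt{1-\beta}/(1-\beta))+o(R)=O(R\beta/\sqrt{1-\beta})+o(R)$. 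Both dominant terms are $O(R/\sqrt{1-\beta})$, and the specific balancing $\lambda/(1-\lambda)=\sqrt{1-\beta}/\beta$ is what makes the leading constant equal to $1$, yielding $\reg_{T,\beta}(\alpha)\leq R/\sqrt{1-\beta}+o(R)$.

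\emph{Main obstacle.} The delicate part is not the high-level scheme but the exact constant in the leading $R/\sqrt{1-\beta}$ term. Unlike the undiscounted case, the regularizer coefficient $\eta_t$ and the loss weights $\beta^{t-1-i}$ are both time-varying, so the ``be-the-regularized-leader'' telescoping produces cross-terms of the form $(\eta_{t+1}-\beta\eta_t)\psi(r_{t+1})$ that need to be controlled carefully. Tracking these, and verifying that the particular choice of $\lambda$ optimally balances the regularizer and stability contributions (rather than only matching orders), is where I expect the bulk of the bookkeeping to sit.
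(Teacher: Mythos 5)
Your Step~1 (the FTRL equivalence) is correct and essentially matches the paper's: after plugging in the closed form of $\bar P_\beta$, the mixture $P_t$ is a genuine probability distribution, and the variational characterization of the quantile under pinball loss recovers the stated objective after dividing by $(1-\lambda)(1-\beta)$. (The identity $\lambda/(1-\lambda)=\sqrt{1-\beta}/\beta$ is not actually needed here, only later.) One small thing to make rigorous: the minimizer is unique because the $P_0$ component gives the objective strict convexity, so $\argmin$ really does pick out $q_\alpha(P_t)$.

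Where you diverge from the paper is Steps~2--3, and the obstacle you flag at the end is precisely the one the paper avoids with a change of variables you did not make. Rather than running a bespoke ``discounted FTRL'' analysis on $\Phi_t(r)=\eta_t\psi(r)+\sum_{i<t}\beta^{t-1-i}l_\alpha(r,r^*_i)$ and fighting the cross-terms $(\eta_{t+1}-\beta\eta_t)\psi(r_{t+1})$, the paper rescales the loss to $h_t(r)\defeq\beta^{-t}l_\alpha(r,r^*_t)$. Then (after multiplying by $\beta^{1-t}$) $r_t(\alpha)$ minimizes $H_t(r)=z_t\psi(r)+\sum_{i<t}h_i(r)$ with $z_t=\beta^{1-t}\eta_t=(1-\beta)^{-1}\bigl(\frac{\lambda\beta^{1-t}}{1-\lambda}+1\bigr)$, which is a \emph{standard} strongly-convex FTRL with an increasing regularizer weight and time-\emph{invariant} per-round loss functions. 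One can then invoke the off-the-shelf FTRL bound (\citealp[Corollary~7.9]{orabona2023modern}) directly, with no discounted telescoping at all, and multiply the resulting undiscounted-regret inequality by $\beta^T$ to land back on $\reg_{T,\beta}(\alpha)$. The only new ingredient this introduces is the subgradient bound $|g_t|\leq\beta^{-t}$ for the scaled loss $h_t$, which feeds into the stability sum $\frac{R}{2}\sum_t g_t^2/z_t\leq\frac{1-\lambda}{\lambda}\beta^{-T-1}\cdot\frac{R}{2}$. Carrying through the arithmetic with $\frac{\lambda}{1-\lambda}=\beta^{-1}\sqrt{1-\beta}$ then gives the exact leading constant: $\reg_{T,\beta}(\alpha)\leq\frac{R}{2}\bigl(\frac{\beta^T}{1-\beta}+\frac{2}{\sqrt{1-\beta}}\bigr)=\frac{R}{\sqrt{1-\beta}}+o(R)$. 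So your plan is not wrong, but as written it leaves the hardest bookkeeping unresolved; introducing the $\beta^{-t}$ rescaling is the missing idea that makes the argument closed and the constant come out clean.
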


We remark that \citep[Theorem~7]{zhang2024discounted} presents a discounted regret lower bound on linear losses, which can be converted to $\Omega(\min\{\alpha,1-\alpha\}R/\sqrt{1-\beta^2})$ on the quantile losses we consider. Since $(1-\beta)^{-1/2}\leq 2(1-\beta^2)^{-1/2}$ for all $\beta\in(0,1)$, Theorem~\ref{thm:discounted} matches this lower bound in the minimax sense (with respect to $\alpha$, i.e., when $\alpha=1/2$). 

\section{Experiment}\label{section:experiment}

Complementing our theoretical results, we now evaluate the performance of our Bayesian approach using more experiments. Code is available at \url{https://github.com/zhiyuzz/Bayesian-Conformal}.

\begin{figure}[ht]
\centering
    \begin{minipage}{0.369\textwidth}
    \includegraphics[width=\textwidth]{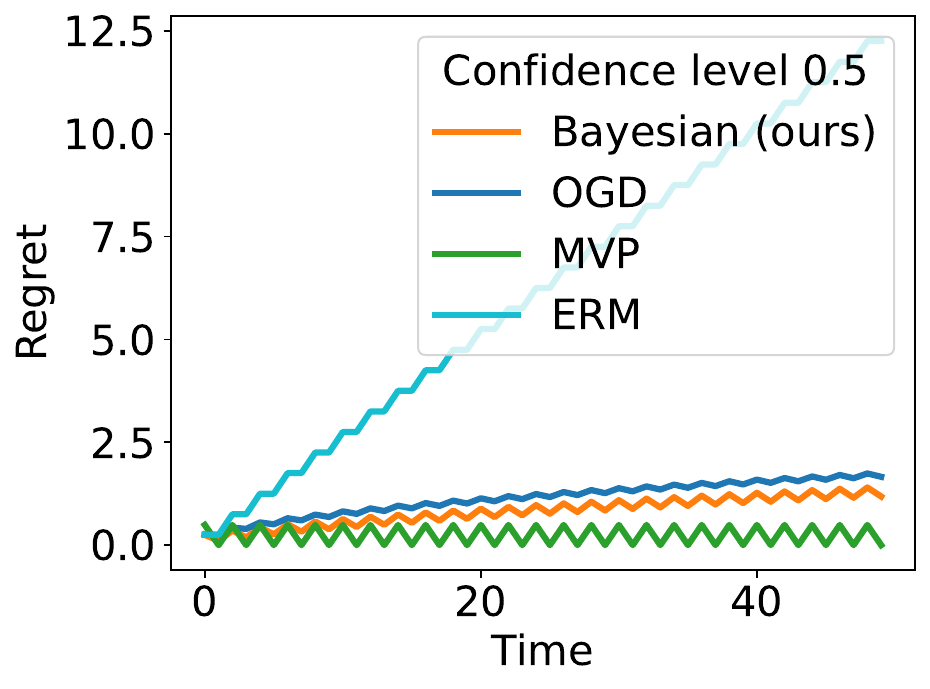}
    \end{minipage}
    \begin{minipage}{0.35\textwidth}
        \includegraphics[width=\textwidth]{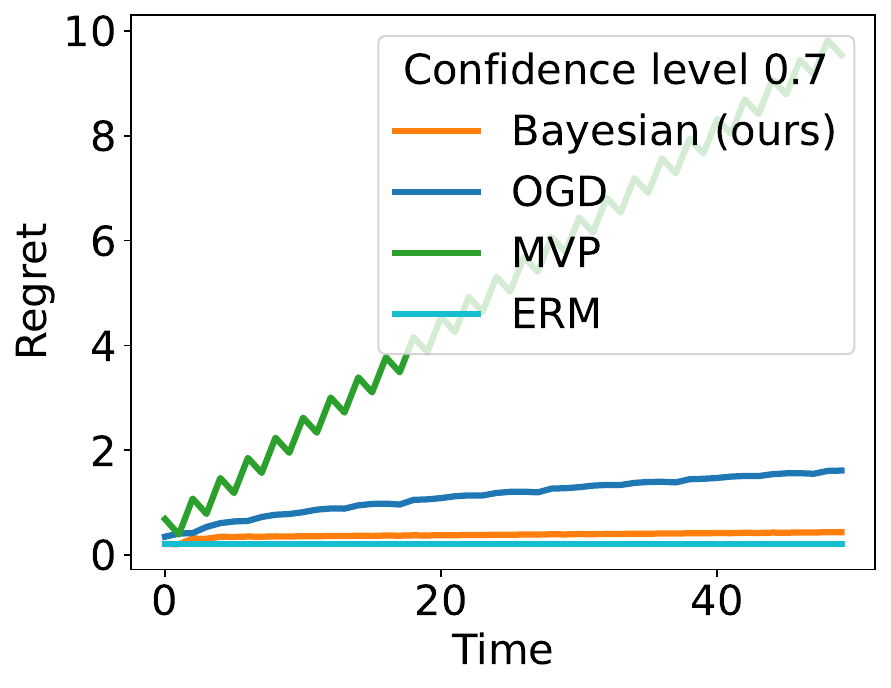}
    \end{minipage}
    \caption{Regret on switching data.}
    \label{fig:switching}
\end{figure}

\paragraph{Switching sequence} First, to demonstrate the failure of ERM without the iid assumption, we consider a synthetic $r^*_{1:T}$ sequence which switches in every round between 0 and 1. Similar to Section~\ref{section:validity}, four algorithms are tested: OGD \citep{gibbs2021adaptive}, MVP \citep{bastani2022practical},\footnote{Similar to Section~\ref{section:validity}, we use a single random seed for the MVP baseline throughout this section, since we find the results to be generally insensitive to the seed.} ERM and our Bayesian algorithm \textsc{Quantized}. Figure~\ref{fig:switching} plots their regret measured by the quantile loss, under two different $\alpha$ values. 

Consistent with the classical online learning theory, ERM becomes brittle when $\alpha$ matches the long run average of $r^*_{1:T}$ (i.e, $0.5$), suffering linear regret with respect to $T$. In contrast, both OGD (with $\eta_t=t^{-1/2}$; $\alpha$ is known) and our Bayesian algorithm achieve sublinear regret under both $\alpha$ values. Quite different from the conventional online learning framework, MVP is designed to minimize the conditional empirical coverage error, but nonetheless, it achieves low regret when $\alpha=0.5$. The limitation is that MVP requires a relatively long period to warm up: when $\alpha=0.7$, the regret of MVP grows linearly at the beginning, before hitting a plateau at $T\approx 800$. 

\paragraph{Stock price} Next, we move to an actual CP problem: predicting the time-varying volatility of the stock price, with the base model being a standard time series forecasting method called GARCH \citep{bollerslev1986generalized}. This experiment was designed by \cite{gibbs2021adaptive} and further studied by \cite{bastani2022practical}. See \citep[Appendix~B.3.1]{bastani2022practical} for the specifics of its context. 

Two baselines are considered: a specialization of OGD (ACI) for time series forecasting, and MVP. Besides requiring a fixed learning rate, the former operates on a sliding time window whose length is also a hyperparameter. Similarly, MVP requires picking the size of discretization. For both baselines, we follow the exact implementation from \citep{bastani2022practical}, including the hyperparameters. 

As for our Bayesian approach, we adopt the discounted version to handle the continual distribution shift, together with quantization. The discretization grid $\tilde A$ has the same size as the MVP baseline, and we pick the discount factor $\beta$ such that the effective length $(1-\beta^2)^{-1}$ of the discounted time window exactly matches the length of the ACI baseline's sliding window. Given this $\beta$, $\lambda_t$ is selected according to Theorem~\ref{thm:discounted}. It means that compared to the baselines, our algorithm cannot benefit from any extra hyperparameter tuning. 

\begin{figure}[ht]
    \centering
\includegraphics[width=0.9\linewidth]{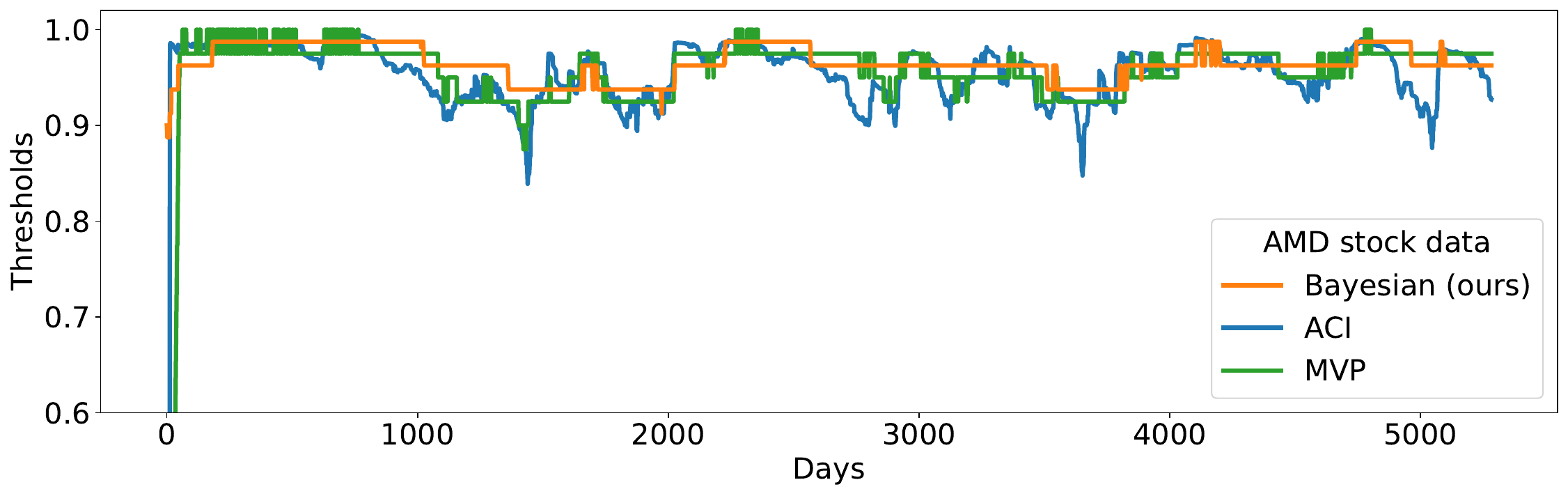}
    \caption{Predicted score threshold on AMD stock data.}
    \label{fig:stock_threshold_amd}
\end{figure}

\begin{figure}[ht]
\centering
    \begin{minipage}{0.35\textwidth}
        \includegraphics[width=\textwidth]{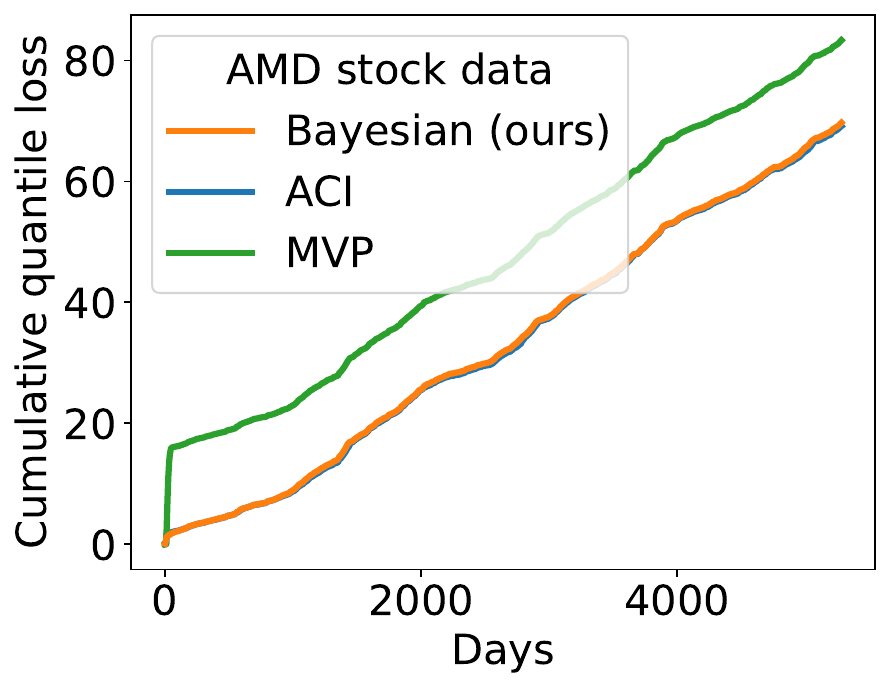}
    \end{minipage}
    \begin{minipage}{0.35\textwidth}
        \includegraphics[width=\textwidth]{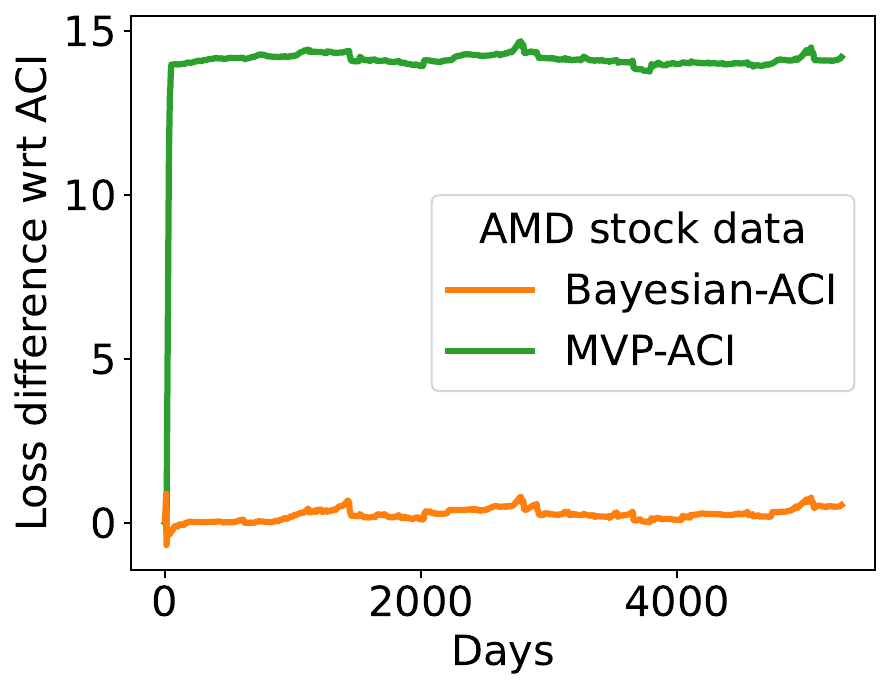}
    \end{minipage}
    \caption{Quantile loss on AMD stock data.}
    \label{fig:loss_amd}
\end{figure}

With $\alpha=0.9$, Figure~\ref{fig:stock_threshold_amd} plots the $r_{1:T}(\alpha)$ sequence predicted by different algorithms. As a visual sanity check, our algorithm generates a reasonable prediction sequence with slightly less fluctuation than the baselines. To make a more concrete comparison, Figure~\ref{fig:loss_amd} plots the total quantile loss suffered by all three algorithms, as well as the difference compared to ACI. It shows that our algorithm achieves almost the same total loss as ACI, and it is faster to warm up than MVP. 

Finally, we also evaluate the empirical coverage rate of the tested algorithms. Although our algorithm is not designed for this metric, it performs competitively compared to the baselines. The target is $1-\alpha=0.9$, and closer to this target is better. ACI achieves $0.901$, MVP achieves $0.893$, and our Bayesian algorithm achieves $0.899$. Appendix~\ref{section:additional_experiment} includes results on a different stock dataset.

\section{Conclusion}

Focusing on the online adversarial formulation of conformal prediction, this paper demonstrates various benefits of being Bayesian. Specifically, we propose a novel algorithm with a number of strengths -- it supports multiple arbitrary confidence level queries, achieves probably low regret, avoids a previously overlooked monotonicity issue on the obtained confidence sets, and adapts to iid environments. We further develop quantized and discounted extensions of this algorithm, and our theoretical arguments are supported by experiments. 

For future directions, we believe that an important remaining problem in online conformal prediction is to rigorously characterize the strengths and limitations of various performance metrics. This paper contains an argument against the coverage frequency error, but additional effort is needed to make it more quantitative and concrete. Besides, it is valuable to have \emph{group-conditional} guarantees analogous to \cite{bastani2022practical,noarov2023high}, which is an interesting problem left for future works. 

\section*{Acknowledgement}

This work is partially funded by Harvard University Dean’s Competitive Fund for Promising Scholarship.

\bibliography{Bayesian_CP}

\newpage
\section*{Appendix}
\appendix

\section{Omitted Proofs}\label{section:proofs}

\mainthm*

\begin{proof}[Proof of Theorem~\ref{thm:main}]
We first rewrite the base regularizer $\psi$ as
\begin{align*}
\psi(r)&=\int_0^Rl_\alpha(r,r^*)p_0(r^*)dr^*\\
&=(1-\alpha)\int_0^r(r-r^*)p_0(r^*)dr^*+\alpha\int_r^R(r^*-r)p_0(r^*)dr^*.
\end{align*}
It is twice-differentiable, with
\begin{equation*}
\psi'(r)=(1-\alpha)\int_0^rp_0(r^*)dr^*-\alpha\int_r^Rp_0(r^*)dr^*=\int_0^rp_0(r^*)dr^* -\alpha,
\end{equation*}
and $\psi''(r)=p_0(r)$. The strong convexity statement on $\psi$ is thus clear. If $P_0$ is uniform, we have
\begin{align*}
\psi(r)&=R^{-1}\spar{(1-\alpha)\int_0^r(r-r^*)dr^*+\alpha\int_r^R(r^*-r)dr^*}\\
&=\frac{1}{2R}\spar{(1-\alpha) r^2+\alpha(R-r)^2}=\frac{1}{2R}r^2-\alpha r+\frac{1}{2}\alpha R.
\end{align*}

Next, consider the first part of the theorem. The case of $t=1$ is straightforward to verify. For any $t\geq 2$, Algorithm~\ref{alg:main} outputs
\begin{align}
r_t(\alpha)&=q_{\alpha}\spar{\lambda_tP_0+(1-\lambda_t)\bar P (r^*_{1:t-1})}\nonumber\\
&=\min\left\{r:\lambda_t\int_0^rp_0(r^*)dr^*+\frac{1-\lambda_t}{t-1}\sum_{i=1}^{t-1}\bm{1}[r^*_i\leq r]\geq \alpha\right\}. \label{eq:compare_to}
\end{align}
On the other hand, consider the optimization objective in Eq.(\ref{eq:ftrl}), which we write as
\begin{equation}\label{eq:ftrl_objective}
F_t(r)\defeq \frac{\lambda_t(t-1)}{1-\lambda_t}\psi(r)+\sum_{i=1}^{t-1}l_\alpha(r,r^*_i).
\end{equation}
Notice that the function $F_t(r)$ is continuous and right-differentiable. Taking its right-derivative, we have
\begin{align*}
F'_{t,+}(r)&=\frac{\lambda_t(t-1)}{1-\lambda_t}\spar{\int_0^rp_0(r^*)dr^* -\alpha}+\rpar{-\alpha\sum_{i=1}^{t-1}\bm{1}[r< r^*_i]+(1-\alpha)\sum_{i=1}^{t-1}\bm{1}[r\geq r^*_i]}\\
&=\frac{\lambda_t(t-1)}{1-\lambda_t}\int_0^rp_0(r^*)dr^*-\frac{\alpha \lambda_t(t-1)}{1-\lambda_t}-\alpha(t-1)+\sum_{i=1}^{t-1}\bm{1}[r\geq r^*_i]\\
&=\frac{t-1}{1-\lambda_t}\rpar{\lambda_t\int_0^rp_0(r^*)dr^*+\frac{1-\lambda_t}{t-1}\sum_{i=1}^{t-1}\bm{1}[r\geq r^*_i]-\alpha}.
\end{align*}
Comparing it to Eq.(\ref{eq:compare_to}), we see that the output $r_t(\alpha)$ of Algorithm~\ref{alg:main}, given by Eq.(\ref{eq:compare_to}), satisfies
\begin{equation*}
r_t(\alpha)=\min\{r:F'_{t,+}(r)\geq 0\}. 
\end{equation*}
Since the function $F_{t}(r)$ is strictly convex, we have $r_t(\alpha)=\argmin_r F_t(r)$, which is equivalent to Eq.(\ref{eq:ftrl}).
\end{proof}

\regbound*

\begin{proof}[Proof of Theorem~\ref{thm:regret}]
The proof can be decomposed into the following steps.

\paragraph{Step 1} Starting from the FTRL formulation Eq.(\ref{eq:ftrl}), we first verify that the regularizer weight $\frac{\lambda_t(t-1)}{1-\lambda_t}$ is increasing with respect to $t$ (when $t>1$), which is required by the FTRL analysis. To this end, define
\begin{equation*}
h_t\defeq \frac{\lambda_t(t-1)}{1-\lambda_t}=\frac{t-1}{\sqrt{t}-1}. 
\end{equation*}
Taking the derivative with respect to $t$, for all $t>1$,
\begin{equation*}
\frac{dh_t}{dt}=\frac{\sqrt{t}-1-\frac{t-1}{2\sqrt{t}}}{(\sqrt{t}-1)^2}=\frac{t-2\sqrt{t}+1}{2\sqrt{t}(\sqrt{t+1}-1)^2}=\frac{(\sqrt{t}-1)^2}{2\sqrt{t}(\sqrt{t+1}-1)^2}\geq 0. 
\end{equation*}
For completeness, we also define $h_1=1$. 

Besides, we have the order estimate $h_t=O(\sqrt{t})$, $1/h_t=O(1/\sqrt{t})$, where $O(\cdot)$ only hides an absolute constant. 

\paragraph{Step 2} Next, due to Theorem~\ref{thm:main}, we can apply the standard FTRL analysis. Recall our notation from Eq.(\ref{eq:ftrl_objective}): we write the optimization objective in Eq.(\ref{eq:ftrl}) as
\begin{equation*}
F_t(r)\defeq h_t\psi(r)+\sum_{i=1}^{t-1}l_\alpha(r,r^*_i), \quad\forall t\geq 2.
\end{equation*}
Similarly, we also write $F_1(r)\defeq h_1\psi(r)$. Notice that $r_t(\alpha)=\argmin_{r\in\R} F_t(r)$ for all $t$. 

The classical FTRL equality lemma \citep[Lemma~7.1]{orabona2023modern} states that
\begin{align*}
\reg_T(\alpha)&=h_{T+1}\psi(q_\alpha(r^*_{1:T}))-\min_{r\in\R}\psi(r)+\sum_{t=1}^T\spar{F_t(r_t(\alpha))-F_{t+1}(r_{t+1}(\alpha))+l_\alpha(r_t(\alpha),r^*_t)}\\
&\qquad+F_{T+1}(r_{T+1}(\alpha))-F_{T+1}(q_\alpha(r^*_{1:T}))\\
&\leq h_{T+1}\psi(q_\alpha(r^*_{1:T}))+\sum_{t=1}^T\spar{F_t(r_t(\alpha))-F_{t+1}(r_{t+1}(\alpha))+l_\alpha(r_t(\alpha),r^*_t)},
\end{align*}
where the second line is due to $\min_r\psi(r)\geq 0$, and $r_{T+1}(\alpha)=\argmin_{r\in\R}F_{T+1}(r)$. 

Consider the sum on the RHS, where for conciseness we omit $(\alpha)$ in the notation. This is the typical one-step quantity involved in the FTRL analysis. Following a similar procedure as \citep[Lemma~7.8]{orabona2023modern}, we have
\begin{align*}
&F_t(r_t)-F_{t+1}(r_{t+1})+l_\alpha(r_t,r^*_t)\\
=~&F_t(r_t)+l_\alpha(r_t,r^*_t)-F_{t}(r_{t+1})-l_\alpha(r_{t+1},r^*_t)+(h_t-h_{t+1})\psi(r_{t+1})\\
\leq~& F_t(r_t)+l_\alpha(r_t,r^*_t)-F_{t}(r_{t+1})-l_\alpha(r_{t+1},r^*_t)\tag{$h_{t+1}\geq h_t$, $\psi(r_{t+1})\geq 0$}\\
\leq~& F_t(r_t)+l_\alpha(r_t,r^*_t)-\min_{r\in\R}\spar{F_{t}(r)+l_\alpha(r,r^*_t)}.
\end{align*}
Observe that since $F_t(\cdot)$ and $l_\alpha(\cdot,r^*_t)$ are both convex, the minimizing argument of their sum lies between their respective unique minimizers, $r_t$ and $r^*_t$. On this segment, the function $F_t$ is $h_t\mu_{t,\alpha}$-strongly-convex, where $\mu_{t,\alpha}$ is defined in the assumption of the theorem. We now proceed using the property of strong convexity \citep[Lemma~7.6]{orabona2023modern}, which we restate as Lemma~\ref{lemma:strong_convexity}. 

Concretely, if $g_t$ is a subgradient of $l_\alpha(\cdot,r^*_t)$ at $r_t$, then it is also a subgradient of $F_t(\cdot)+l_\alpha(\cdot,r^*_t)$ at $r_t$, since $r_t=\argmin_r F_t(r)$. Moreover, such a subgradient $g_t$ satisfies $\abs{g_t}\leq 1$ due to $l_\alpha(\cdot,r^*_t)$ being $1$-Lipschitz. Combining these with the strong convexity, Lemma~\ref{lemma:strong_convexity} yields
\begin{equation*}
F_t(r_t)+l_\alpha(r_t,r^*_t)-\min_{r\in\R}\spar{F_{t}(r)+l_\alpha(r,r^*_t)}\leq \frac{1}{2h_t\mu_{t,\alpha}}. 
\end{equation*}
Plugging this all the way back into the regret bound, we have
\begin{align*}
\reg_T(\alpha)&\leq h_{T+1}\psi(q_\alpha(r^*_{1:T}))+\frac{1}{2}\sum_{t=1}^T\frac{1}{h_t\mu_{t,\alpha}}\\
&=O\rpar{\psi(q_\alpha(r^*_{1:T}))\sqrt{T}+\sum_{t=1}^T\frac{1}{\mu_{t,\alpha}\sqrt{t}}}.
\end{align*}

\paragraph{Step 3} Finally we analyze the special case of uniform prior. From Theorem~\ref{thm:main},
\begin{equation*}
\psi(q_\alpha(r^*_{1:T}))\leq \max_{r\in[0,R]}\psi(r)\leq \max_{r\in[0,R]}\rpar{\frac{1}{2R}r^2-\alpha r+\frac{1}{2}\alpha R}\leq \frac{R}{2}.
\end{equation*}
Furthermore, $\mu_{t,\alpha}=1/R$. Plugging in $\sum_{t=1}^Tt^{-1/2}=O(\sqrt{T})$ completes the proof.
\end{proof}

\coverage*

\begin{proof}[Proof of Theorem~\ref{thm:coverage}]
The proof follows a similar strategy as \citep[Theorem~34]{roth2022uncertain}. First, for any fixed $t\geq 2$, the samples $r^*_{1:t-1}$ have no ties almost surely, since the underlying distribution $\mathcal{D}$ is continuous. We will condition the rest of the analysis on this event. 

Next, recall Algorithm~\ref{alg:main}'s prediction rule, Eq.(\ref{eq:compare_to}). On one hand, we have
\begin{equation*}
\lambda_t\int_0^{r_t(\alpha)}p_0(r^*)dr^*+\frac{1-\lambda_t}{t-1}\sum_{i=1}^{t-1}\bm{1}[r^*_i\leq r_t(\alpha)]\geq \alpha,
\end{equation*}
which means
\begin{equation*}
\frac{1-\lambda_t}{t-1}\sum_{i=1}^{t-1}\bm{1}[r^*_i\leq r_t(\alpha)]\geq \alpha-\lambda_t,
\end{equation*}
\begin{equation*}
\frac{1}{t-1}\sum_{i=1}^{t-1}\bm{1}[r^*_i\leq r_t(\alpha)]\geq \alpha+\frac{\lambda_t}{1-\lambda_t}(\alpha-1)\geq\alpha-\frac{1}{\sqrt{t}-1}.
\end{equation*}

On the other hand, if we define $m=\sum_{i=1}^{t-1}\bm{1}[r^*_i\leq r_t(\alpha)]$ and let $r^*_{-1}$ be the $(m-1)$-th smallest element of $r^*_{1:t-1}$, then it is also clear from Eq.(\ref{eq:compare_to}) that
\begin{equation*}
\lambda_t\int_0^{r^*_{-1}}p_0(r^*)dr^*+\frac{1-\lambda_t}{t-1}\sum_{i=1}^{t-1}\bm{1}[r^*_i\leq r^*_{-1}]\leq \alpha,
\end{equation*}
which means
\begin{equation*}
\frac{1-\lambda_t}{t-1}\sum_{i=1}^{t-1}\bm{1}[r^*_i\leq r^*_{-1}]\leq \alpha-\lambda_t\int_0^{r^*_{-1}}p_0(r^*)dr^*\leq \alpha,
\end{equation*}
\begin{equation*}
\frac{1}{t-1}\sum_{i=1}^{t-1}\bm{1}[r^*_i\leq r^*_{-1}]\leq \frac{\alpha}{1-\lambda_t}\leq \alpha+\frac{1}{\sqrt{t}-1},
\end{equation*}
\begin{equation*}
\frac{1}{t-1}\sum_{i=1}^{t-1}\bm{1}[r^*_i\leq r_t(\alpha)]\leq \frac{1}{t-1}\sum_{i=1}^{t-1}\bm{1}[r^*_i\leq r^*_{-1}]+\frac{1}{t-1}\leq \alpha+\frac{1}{\sqrt{t}-1}+\frac{1}{t-1}.
\end{equation*}

In summary, 
\begin{equation}\label{eq:sample_quantile}
\alpha-\frac{1}{\sqrt{t}-1}\leq \frac{1}{t-1}\sum_{i=1}^{t-1}\bm{1}[r^*_i\leq r_t(\alpha)]\leq \alpha+\frac{1}{\sqrt{t}-1}+\frac{1}{t-1}.
\end{equation}

Finally we apply the DKW inequality (Lemma~\ref{lemma:dkw}). For all $\eps>0$, we have
\begin{equation*}
\P_{r^*_{1:t-1}}\spar{\sup_{\alpha\in[0,1]}\abs{\rpar{\frac{1}{t-1}\sum_{i=1}^{t-1}\bm{1}[r^*_i\leq r_t(\alpha)]}-\P_{r^*_t}[r^*_t\leq r_t(\alpha)]}>\eps}\leq 2\exp\spar{-2(t-1)\eps^2}.
\end{equation*}
Therefore, with probability at least $1-\delta$ over the randomness of $r^*_{1:t-1}$, we have
\begin{equation*}
\abs{\rpar{\frac{1}{t-1}\sum_{i=1}^{t-1}\bm{1}[r^*_i\leq r_t(\alpha)]}-\P_{r^*_t}[r^*_t\leq r_t(\alpha)]}\leq \sqrt{\frac{\log(2/\delta)}{2(t-1)}},\quad\forall\alpha\in[0,1].
\end{equation*}
Combining it with Eq.(\ref{eq:sample_quantile}) above completes the proof. 
\end{proof}

\risk*

\begin{proof}[Proof of Theorem~\ref{thm:risk}]
The proof follows from a standard uniform convergence argument \citep{zhang2023mathematical} combined with the Lipschitzness of the quantile loss. 

First, notice that with any combination of $\alpha$, $r$ and $r^*$, the quantile loss $l_\alpha(r,r^*)\in[0,R]$. Therefore, fixing any $\alpha\in[0,1]$ and $r\in[0,R]$, we apply the Hoeffding's inequality (Lemma~\ref{lemma:hoeffding}) to obtain
\begin{equation*}
\P_{r^*_{1:t-1}}\spar{\abs{\frac{1}{t-1}\sum_{i=1}^{t-1}l_\alpha(r,r^*_i)-\E_{r^*_t}[l_\alpha(r,r^*_t)]}\geq \eps}\leq 2\exp\rpar{-\frac{2(t-1)\eps^2}{R^2}}.
\end{equation*}

Next, we evenly discretize $[0,1]$ by a grid of size $\sqrt{t}$, and also $[0,R]$ by a grid of size $\sqrt{t}$, and denote their combination as a set $S$. $\abs{S}=t$. For all $\alpha$ and $r$, there exists $(\tilde \alpha,\tilde r)\in S$ satisfying $\abs{\alpha-\tilde\alpha}\leq 1/\sqrt{t}$ and $\abs{r-\tilde r}\leq R/\sqrt{t}$. Applying the union bound on $S$ yields
\begin{equation*}
\P_{r^*_{1:t-1}}\spar{\max_{(\alpha,r)\in S}\abs{\frac{1}{t-1}\sum_{i=1}^{t-1}l_\alpha(r,r^*_i)-\E_{r^*_t}[l_\alpha(r,r^*_t)]}\geq \eps}\leq 2t\exp\rpar{-\frac{2(t-1)\eps^2}{R^2}},
\end{equation*}
which means with probability at least $1-\delta$, 
\begin{equation*}
\max_{(\alpha,r)\in S}\abs{\frac{1}{t-1}\sum_{i=1}^{t-1}l_\alpha(r,r^*_i)-\E_{r^*_t}[l_\alpha(r,r^*_t)]}\leq R\sqrt{\frac{\log(2t/\delta)}{2(t-1)}}.
\end{equation*}
Since $l_\alpha(r,r^*)$ is $R$-Lipschitz with respect to $\alpha$, and $1$-Lipschitz with respect to $r$, we have
\begin{equation*}
\max_{0\leq \alpha\leq 1,0\leq r\leq R}\abs{\frac{1}{t-1}\sum_{i=1}^{t-1}l_\alpha(r,r^*_i)-\E_{r^*_t}[l_\alpha(r,r^*_t)]}\leq R\sqrt{\frac{\log(2t/\delta)}{2(t-1)}}+\frac{2R}{\sqrt{t}}.
\end{equation*}

Finally, due to Theorem~\ref{thm:main} we have for all 
$\alpha$ and $r$, 
\begin{align*}
\frac{1}{t-1}\sum_{i=1}^{t-1}l_\alpha(r_t(\alpha),r^*_i)&\leq \frac{1}{t-1}\sum_{i=1}^{t-1}l_\alpha(r,r^*_i)+\frac{\lambda_t}{1-\lambda_t}\spar{\psi(r)-\psi(r_t(\alpha))}\\
&\leq \frac{1}{t-1}\sum_{i=1}^{t-1}l_\alpha(r,r^*_i)+\frac{1}{\sqrt{t}-1}\max\left\{\psi(0),\psi(R)\right\}\\
&\leq \frac{1}{t-1}\sum_{i=1}^{t-1}l_\alpha(r,r^*_i)+\frac{R}{\sqrt{t}-1}.
\end{align*}
Combining it with the generalization error bound above, with high probability we have for all $\alpha$ and $r$,
\begin{equation*}
\E_{r^*_t}[l_\alpha(r_t(\alpha),r^*_t)]\leq \E_{r^*_t}[l_\alpha(r,r^*_t)]+\frac{R}{\sqrt{t}-1}+2\rpar{R\sqrt{\frac{\log(2t/\delta)}{2(t-1)}}+\frac{2R}{\sqrt{t}}}.
\end{equation*}
Taking $\min_r$ on the RHS completes the proof.
\end{proof}

\quantized*

\begin{proof}[Proof of Theorem~\ref{thm:quantized}]
Recall from Section~\ref{subsection:quantized} that the quantized true score is denoted by $\tilde r^*_t$. From Theorem~\ref{thm:regret}, we have
\begin{align*}
\sum_{t=1}^Tl_\alpha(r_t(\alpha),\tilde r^*_t)-\sum_{t=1}^Tl_\alpha(q_{\alpha}(r^*_{1:T}),\tilde r^*_t)&\leq \sum_{t=1}^Tl_\alpha(r_t(\alpha),\tilde r^*_t)-\sum_{t=1}^Tl_\alpha(q_{\alpha}(\tilde r^*_{1:T}),\tilde r^*_t)\\
&=O(R\sqrt{T}). 
\end{align*}
As $\abs{\tilde r^*_t-r^*_t}\leq R/\sqrt{T}$ and the quantile loss function $l_\alpha(r,r^*)$ is $1$-Lipschitz with respect to $r^*$, we have
\begin{align*}
\abs{\sum_{t=1}^Tl_\alpha(r_t(\alpha),\tilde r^*_t)-\sum_{t=1}^Tl_\alpha(r_t(\alpha),r^*_t)}&\leq \sum_{t=1}^T\abs{l_\alpha(r_t(\alpha),\tilde r^*_t)-l_\alpha(r_t(\alpha),r^*_t)}\\
&\leq \sum_{t=1}^T\abs{\tilde r^*_t-r^*_t}\leq R\sqrt{T}.
\end{align*}
The comparator term $\sum_{t=1}^Tl_\alpha(q_{\alpha}(r^*_{1:T}),\tilde r^*_t)$ can be related similarly to $\sum_{t=1}^Tl_\alpha(q_{\alpha}(r^*_{1:T}),r^*_t)$, and combining the above completes the proof.
\end{proof}

\discounted*

\begin{proof}[Proof of Theorem~\ref{thm:discounted}]
Analogous to Eq.(\ref{eq:compare_to}), we can write the output of \textsc{Discounted} as
\begin{align*}
r_t(\alpha)&=q_{\alpha}\spar{\lambda P_0+(1-\lambda )\bar P_\beta (r^*_{1:t-1})}\\
&=\min\left\{r:\frac{r}{R}\spar{\lambda+\beta^{t-1}(1-\lambda)}+(1-\lambda)(1-\beta)\sum_{i=1}^{t-1}\beta^{t-1-i}\bm{1}[r^*_i\leq r]\geq \alpha\right\}.
\end{align*}
Similar to Eq.(\ref{eq:ftrl_objective}), this can be verified as a minimizer of the objective
\begin{equation*}
H_t(r)\defeq (1-\beta)^{-1}\rpar{\frac{\lambda\beta^{1-t}}{1-\lambda}+1}\psi(r)+\sum_{i=1}^{t-1}\beta^{-i}l_\alpha(r,r^*_i).
\end{equation*}
For the convenience of notation, we will write the regularizer weight as $z_t\defeq (1-\beta)^{-1}\rpar{\frac{\lambda\beta^{1-t}}{1-\lambda}+1}$.

Notice that with the uniform $P_0$, the base regularizer $\psi$ is $R^{-1}$-strongly-convex due to Theorem~\ref{thm:main}, therefore we can apply the strong-convexity-based FTRL analysis \citep[Corollary~7.9]{orabona2023modern} on the scaled loss functions,
\begin{equation*}
h_t(r)\defeq\beta^{-t}l_\alpha(r,r^*_t).
\end{equation*}
This yields
\begin{equation*}
\sum_{t=1}^Th_t(r_t(\alpha))-\min_{r\in[0,R]}\sum_{t=1}^Th_t(r)\leq z_T\spar{\max_{r\in[0,R]}\psi(r)-\min_{r\in[0,R]}\psi(r)}+\frac{R}{2}\sum_{t=1}^T\frac{g_t^2}{z_t^2},
\end{equation*}
where $g_t$ can be any subgradient of $h_t(r)$ at $r=r_t(\alpha)$. Scaling both sides by $\beta^T$, we recover the discounted regret definition on the LHS:
\begin{equation*}
\reg_{T,\beta}(\alpha)\leq \beta^Tz_T\spar{\max_{r\in[0,R]}\psi(r)-\min_{r\in[0,R]}\psi(r)}+\frac{R\beta^T}{2}\sum_{t=1}^T\frac{g_t^2}{z_t}.
\end{equation*}

Next we simplify the obtained expression. The range of $\phi$ is contained in $[0,R/2]$. In addition, $\abs{g_t}\leq \beta^{-t}$ since the quantile loss $l_\alpha(r,r^*)$ is $1$-Lipschitz with respect to $r$. Therefore, 
\begin{equation*}
\sum_{t=1}^T\frac{g_t^2}{z_t}\leq \sum_{t=1}^T\frac{\beta^{-2t}}{(1-\beta)^{-1}\rpar{\frac{\lambda\beta^{1-t}}{1-\lambda}+1}}\leq \frac{1-\beta}{\beta}\frac{1-\lambda}{\lambda}\sum_{t=1}^T\beta^{-t}\leq \frac{1-\lambda}{\lambda}\beta^{-T-1},
\end{equation*}
\begin{equation*}
\reg_{T,\beta}(\alpha)\leq \frac{R}{2}\rpar{\frac{\beta^T}{1-\beta}+\frac{\lambda}{1-\lambda}\frac{\beta}{1-\beta}+\frac{1-\lambda}{\lambda}\frac{1}{\beta}}.
\end{equation*}
Notice that our choice of $\lambda$ satisfies $\frac{\lambda}{1-\lambda}=\beta^{-1}\sqrt{1-\beta}$, therefore
\begin{equation*}
\reg_{T,\beta}(\alpha)\leq \frac{R}{2}\rpar{\frac{\beta^T}{1-\beta}+\frac{2}{\sqrt{1-\beta}}}=\frac{R}{\sqrt{1-\beta}}+o(R),
\end{equation*}
where $o(\cdot)$ is with respect to $T\rightarrow \infty$.
\end{proof}

\subsection{Auxiliary Lemma}

\begin{lemma}[Lemma~7.6 of \citep{orabona2023modern}]\label{lemma:strong_convexity}
Let $f$ be a $\mu$-strongly convex function with respect to a norm $\norm{\cdot}$, over a convex set $V$. For all $x,y\in V$ and subgradients $g\in\partial f(y)$, $g'\in\partial f(x)$, we have
\begin{equation*}
f(x)-f(y)\leq \inner{g}{x-y}+\frac{1}{2\mu}\norm{g-g'}_*^2.
\end{equation*}
Here $\inner{\cdot}{\cdot}$ denotes the inner product, and $\norm{\cdot}_*$ denotes the dual norm of $\norm{\cdot}$. 
\end{lemma}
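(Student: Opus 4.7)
The plan is to reduce the claim to a strong-convexity argument on a shifted auxiliary function, which avoids having to invoke the heavier duality between strong convexity and smoothness via the Fenchel conjugate. First, I would introduce the shifted function $\phi(z) \defeq f(z) - \inner{g}{z}$. Subtracting a linear functional preserves $\mu$-strong convexity, so $\phi$ is itself $\mu$-strongly convex on $V$ with respect to $\norm{\cdot}$. In the standard subgradient calculus on $V$, the hypothesis $g \in \partial f(y)$ translates to $y$ being a global minimizer of $\phi$ over $V$, and the hypothesis $g' \in \partial f(x)$ translates to $g' - g \in \partial \phi(x)$.

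Next, I would invoke the defining inequality of $\mu$-strong convexity for $\phi$ at the point $x$ with the subgradient $g' - g$, evaluated at $z = y$:
\begin{equation*}
\phi(y) \geq \phi(x) + \inner{g' - g}{y - x} + \frac{\mu}{2}\norm{x - y}^2.
\end{equation*}
Rearranging gives the upper bound $\phi(x) - \phi(y) \leq \inner{g' - g}{x - y} - (\mu/2)\norm{x - y}^2$. Applying Hölder's inequality in the dual-norm form $\inner{g' - g}{x - y} \leq \norm{g - g'}_* \norm{x - y}$, together with the elementary completion-of-square bound $\norm{g - g'}_* t - (\mu/2) t^2 \leq \norm{g - g'}_*^2 / (2\mu)$ (valid for every $t \geq 0$), one concludes $\phi(x) - \phi(y) \leq \frac{1}{2\mu}\norm{g - g'}_*^2$. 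Unpacking $\phi(x) - \phi(y) = f(x) - f(y) - \inner{g}{x - y}$ then recovers the stated inequality.

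The only point requiring care is the subgradient calculus on the constrained set $V$: one should verify that the defining subgradient inequality for $f$ restricted to $V$ is exactly what transfers both strong convexity and the minimizer characterization to $\phi$ on $V$. This is standard and no real obstacle arises when $x$ or $y$ sits on the boundary, since the whole argument uses only inequalities that hold for all $z \in V$. An alternative but technically heavier route is to dualize: $\mu$-strong convexity of $f$ is equivalent to $(1/\mu)$-smoothness of its Fenchel conjugate $f^*$, and combining the smoothness inequality for $f^*$ at $g'$ evaluated at $g$ with the Fenchel-Young equalities $f(x) + f^*(g') = \inner{g'}{x}$ and $f(y) + f^*(g) = \inner{g}{y}$ produces the same bound with the same constant $1/(2\mu)$.
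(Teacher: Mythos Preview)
The paper does not actually prove this lemma; it is stated without proof as an auxiliary result imported from \citep{orabona2023modern}. Your argument is correct and is essentially the standard proof: shift by the linear functional $\inner{g}{\cdot}$ so that $0\in\partial\phi(y)$ and $g'-g\in\partial\phi(x)$, apply the $\mu$-strong-convexity inequality of $\phi$ at $x$ evaluated at $y$, and then bound $\inner{g'-g}{x-y}-\tfrac{\mu}{2}\norm{x-y}^2$ via H\"older and completion of the square. (The observation that $y$ minimizes $\phi$ is true but not actually needed; only the subgradient $g'-g\in\partial\phi(x)$ enters the chain of inequalities.)
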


The following lemma is a standard tool in ML due to \citep{hoeffding1963probability}.

\begin{lemma}[Hoeffding's inequality]\label{lemma:hoeffding}
Let $x_1,\ldots,x_n$ be iid samples of a real-valued random variable on $[a,b]$. Let $\bar x$ be the mean of the distribution. Then, for all $\eps>0$, we have
\begin{equation*}
\P\spar{\abs{\frac{1}{n}\sum_{i=1}^nx_i-\bar x}\geq \eps}\leq 2\exp\rpar{-\frac{2n\eps^2}{(b-a)^2}}.
\end{equation*}
\end{lemma}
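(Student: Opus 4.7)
The plan is to use the standard Chernoff/moment-generating-function method. Write $S_n \defeq \sum_{i=1}^n x_i$ and let $Y_i \defeq x_i - \bar x$, so each $Y_i$ has mean zero and is bounded in $[a-\bar x, b-\bar x]$, an interval of width $b-a$. For any $\lambda > 0$, Markov's inequality applied to $e^{\lambda(S_n - n\bar x)}$ gives
\begin{equation*}
\P\spar{\tfrac{1}{n}\textstyle\sum_i x_i - \bar x \geq \eps} = \P\spar{e^{\lambda\sum_i Y_i} \geq e^{\lambda n\eps}} \leq e^{-\lambda n \eps}\, \E\spar{e^{\lambda\sum_i Y_i}},
\end{equation*}
and by independence of the $Y_i$, the MGF factorizes: $\E[e^{\lambda \sum_i Y_i}] = \prod_i \E[e^{\lambda Y_i}]$.

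The crux is Hoeffding's lemma, namely that any zero-mean random variable $Y$ supported in an interval of width $b-a$ satisfies $\E[e^{\lambda Y}] \leq \exp(\lambda^2 (b-a)^2 / 8)$. I would prove this via the standard convexity argument: since $e^{\lambda y}$ is convex, for $y \in [a-\bar x, b-\bar x]$ we can bound $e^{\lambda y}$ by the affine interpolant at the endpoints, take expectations (using $\E[Y] = 0$), and obtain $\E[e^{\lambda Y}] \leq e^{L(\lambda)}$ for an explicit function $L$. A second-order Taylor expansion of $L$ around $\lambda = 0$, together with the observation that $L''(\mu) \leq (b-a)^2/4$ uniformly (this is the variance of a Bernoulli-type random variable on the endpoints, maximized at probability $1/2$), delivers $L(\lambda) \leq \lambda^2 (b-a)^2 / 8$.

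Combining, we get
\begin{equation*}
\P\spar{\tfrac{1}{n}\textstyle\sum_i x_i - \bar x \geq \eps} \leq \exp\rpar{-\lambda n \eps + n \lambda^2 (b-a)^2 / 8}.
\end{equation*}
Optimizing the right-hand side over $\lambda > 0$ by setting $\lambda = 4\eps / (b-a)^2$ yields the one-sided bound $\exp(-2n\eps^2/(b-a)^2)$. Applying the same argument to $-Y_i$ gives the matching bound for the opposite tail, and a union bound over the two tails produces the factor of $2$ in the lemma statement.

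The only delicate step is Hoeffding's lemma itself; the rest is essentially bookkeeping. Within that step, the minor subtlety is bounding $L''$ uniformly by $(b-a)^2/4$, since naively one only sees a variance of the tilted distribution, which needs the observation that any distribution on an interval of width $w$ has variance at most $w^2/4$ (attained by the two-point distribution on the endpoints with equal mass). Everything else --- the Chernoff bound, the independence factorization, and the optimization over $\lambda$ --- is routine.
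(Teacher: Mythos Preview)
Your proof is correct and follows the standard Chernoff--MGF route through Hoeffding's lemma. However, the paper does not actually prove this statement: it is listed as an auxiliary lemma with only a citation to \cite{hoeffding1963probability} and the remark that it is ``a standard tool in ML,'' so there is no paper proof to compare against. Your argument is exactly the textbook one and would serve as a complete proof if one were needed.
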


The next lemma is the celebrated Dvoretzky–Kiefer–Wolfowitz inequality, due to \citep{dvoretzky1956asymptotic,massart1990tight}.

\begin{lemma}[DKW inequality]\label{lemma:dkw}
Let $x_1,\ldots,x_n$ be iid samples of a real-valued random variable with cumulative distribution function $F$, and let $\bar P(x_{1:n})$ be the empirical distribution of $x_{1:n}$, with cumulative distribution function $\hat F_n$. For all $\eps>0$, we have
\begin{equation*}
\P\spar{\sup_{x\in\R}\abs{\hat F_n(x)-F(x)}>\eps}\leq 2\exp(-2n\eps^2).
\end{equation*}
\end{lemma}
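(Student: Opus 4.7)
The proof splits into two parts that parallel the structure of the undiscounted theory: first an equivalence between the discounted Bayesian belief and a discounted non-linearized FTRL, and then the regret bound via a scaled-loss reduction.

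For the FTRL-equivalence part, I would mirror the argument used in the proof of Theorem~\ref{thm:main}. Expanding the belief using the recursion for $\bar P_\beta$, one gets
\begin{equation*}
P_t=\bpar{\lambda+(1-\lambda)\beta^{t-1}}P_0+(1-\lambda)(1-\beta)\sum_{i=1}^{t-1}\beta^{t-1-i}\delta(r^*_i),
\end{equation*}
whose total mass is $1$ and whose CDF at $r$ is a mixture of $\int_0^r p_0$ and the discounted empirical CDF. Therefore $r_t(\alpha)=q_\alpha(P_t)$ is the smallest $r$ where that mixture reaches $\alpha$. On the FTRL side, I would take the right-derivative of the proposed objective (call it $F_t(r)$ with regularizer weight $z_t=(1-\beta)^{-1}\bpar{\lambda/(1-\lambda)+\beta^{t-1}}$), using $\psi'(r)=\int_0^r p_0-\alpha$ and the right-derivative of $l_\alpha(\cdot,r^*_i)$, then rearrange to get exactly the same quantile condition after dividing through by the total weight $z_t+\sum_i\beta^{t-1-i}=[(1-\lambda)(1-\beta)]^{-1}$. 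Strict convexity of $F_t$ (the uniform prior makes $\psi$ strictly convex) gives a unique minimizer, completing the equivalence for all $t\geq 1$.

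For the regret bound, the trick used in the paper is to \emph{time-scale the losses}. Define $h_t(r)\defeq\beta^{-t}l_\alpha(r,r^*_t)$. Multiplying the discounted FTRL objective by $\beta^{-(t-1)}$ shows that $r_t(\alpha)$ is also the FTRL iterate for the sequence $h_1,\ldots,h_{t-1}$ with time-varying regularizer $\tilde z_t\psi$, where $\tilde z_t=\beta^{-(t-1)}z_t$ is monotonically increasing (the crucial prerequisite for the standard FTRL bound). Because $P_0$ is uniform, Theorem~\ref{thm:main} gives $\psi$ being $R^{-1}$-strongly convex, and $h_t$ has a subgradient bounded by $\beta^{-t}$ since $l_\alpha(\cdot,r^*)$ is $1$-Lipschitz. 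Plugging into the strong-convexity FTRL corollary (Lemma-style inequality in \citep[Cor.~7.9]{orabona2023modern}) yields
\begin{equation*}
\sum_{t=1}^T h_t(r_t(\alpha))-\min_{r\in[0,R]}\sum_{t=1}^T h_t(r)\leq \tilde z_T\cdot\bpar{\max\psi-\min\psi}+\frac{R}{2}\sum_{t=1}^T\frac{\beta^{-2t}}{\tilde z_t}.
\end{equation*}
Multiplying through by $\beta^T$ converts the left side into $\reg_{T,\beta}(\alpha)$, turns $\beta^T\tilde z_T$ into $\beta z_T$ (which stays $O(1)$), and turns each $\beta^T\cdot\beta^{-2t}/\tilde z_t$ into an $O(\beta^{T-t})$ contribution whose sum is $O(\tfrac{1-\lambda}{\lambda}\cdot\tfrac{1}{\beta})$ after using $\tilde z_t\geq(1-\beta)^{-1}\lambda\beta^{1-t}/(1-\lambda)$ and geometric summation.

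The final simplification just picks $\lambda$ to equate the two dominant terms, $\lambda/(1-\lambda)$ and $(1-\lambda)/\lambda\cdot(1-\beta)$, which yields exactly $\lambda=\sqrt{1-\beta}/(\beta+\sqrt{1-\beta})$ and $\lambda/(1-\lambda)=\beta^{-1}\sqrt{1-\beta}$; the resulting regret collapses to $R/\sqrt{1-\beta}+o(R)$, where the $o(R)$ absorbs the exponentially decaying $\beta^T/(1-\beta)$ boundary term. The main technical obstacle is bookkeeping: one must carefully verify the monotonicity of $\tilde z_t$, check that the subgradient of $h_t$ at $r_t$ really is bounded (not amplified by the $\beta^{-t}$ factor in a harmful way after scaling back), and confirm that the FTRL equivalence allows applying the analysis over the unrestricted domain $\R$ before comparing to the $[0,R]$-restricted minimum on the right-hand side. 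None of these is deep, but the scaled-loss reduction is the conceptual heart and has to be set up cleanly so that the $\beta^T$ that multiplies the final bound lands on the correct terms.
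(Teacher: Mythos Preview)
Your proposal is for the wrong statement. The lemma in question is the Dvoretzky--Kiefer--Wolfowitz inequality, a classical concentration result for the empirical CDF; the paper does not prove it but merely records it with citations to \citep{dvoretzky1956asymptotic,massart1990tight}. What you wrote is a proof sketch for Theorem~\ref{thm:discounted} (the discounted FTRL equivalence and regret bound for \textsc{Discounted}), which is an entirely different claim with a different argument. Nothing in your write-up --- the belief expansion, the scaled-loss reduction, the choice of $\lambda$ --- bears on bounding $\sup_x|\hat F_n(x)-F(x)|$.

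For the actual statement there is nothing to compare against in the paper, since the DKW inequality is imported as a black box. If you were asked to supply a proof, you would need a genuinely probabilistic argument (e.g., reduction to the uniform case via the probability integral transform, then a combinatorial or martingale bound on the one-sided deviation, with the sharp constant $2$ due to Massart). Your FTRL machinery is simply not relevant here.
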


\section{Additional experiment}\label{section:additional_experiment}

Extending Section~\ref{section:experiment}, this section presents the result of our stock price experiment using a different dataset (NVDA instead of AMD). The same procedure from Section~\ref{section:experiment} is followed. Figure~\ref{fig:stock_threshold_nvda} plots the predicted thresholds, and Figure~\ref{fig:loss_nvda} plots the total quantile loss. Overall they exhibit the similar behavior as the result from Section~\ref{section:experiment}. 

\begin{figure}[ht]
    \centering
    \includegraphics[width=0.9\linewidth]{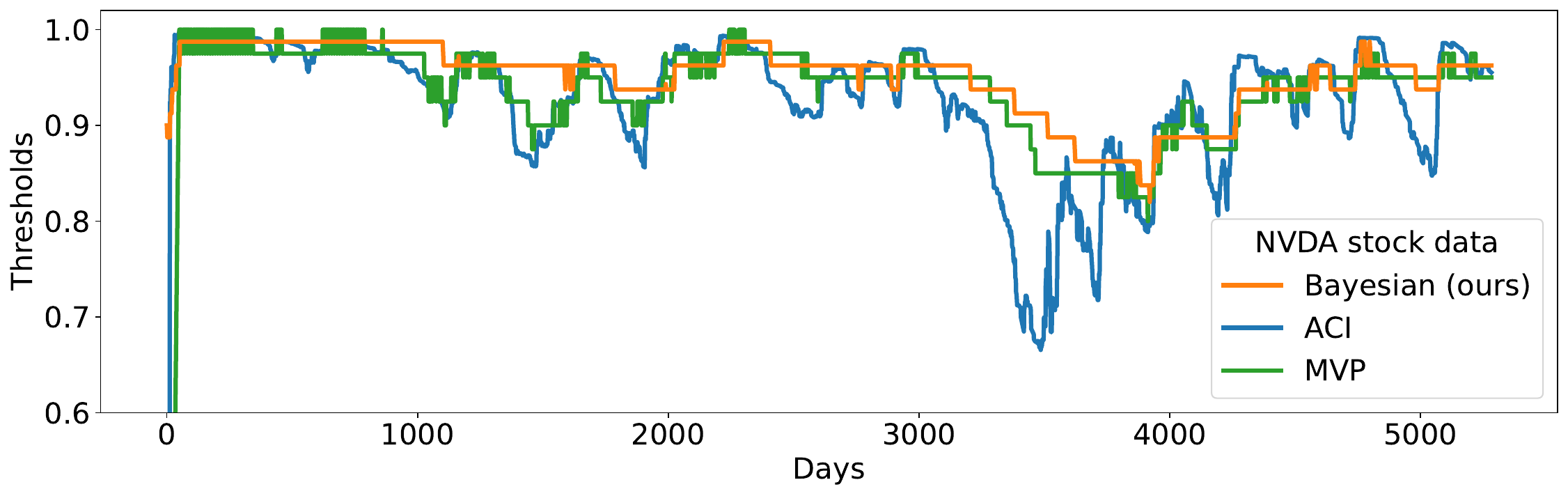}
    \caption{Predicted score threshold on NVDA stock data.}
    \label{fig:stock_threshold_nvda}
\end{figure}

\begin{figure}
\centering
    \begin{minipage}{0.36\textwidth}
        \includegraphics[width=\textwidth]{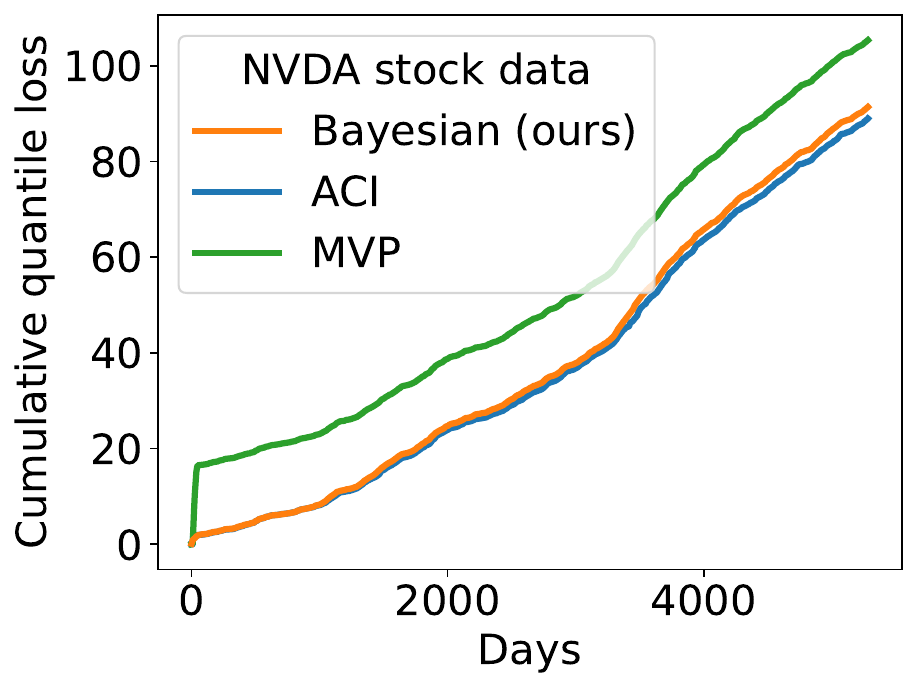}
    \end{minipage}
    \begin{minipage}{0.35\textwidth}
        \includegraphics[width=\textwidth]{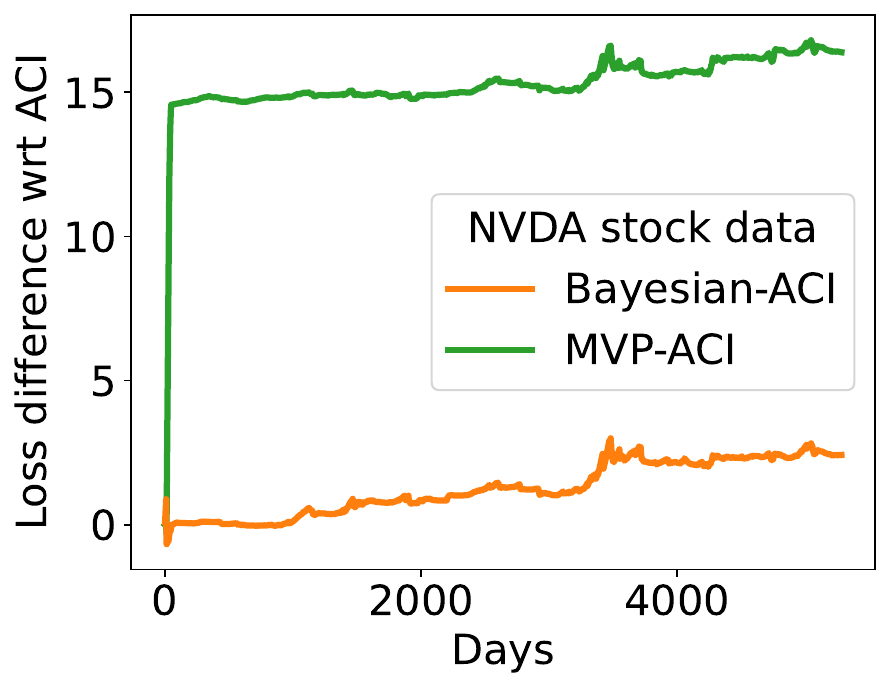}
    \end{minipage}
    \caption{Quantile loss on NVDA stock data.}
    \label{fig:loss_nvda}
\end{figure}

As for the coverage frequency, ACI achieves $0.899$, MVP achieves $0.891$, and our Bayesian algorithm achieves $0.897$. Again, closer to the target $0.9$ is better. The conclusion is that in the fixed-$\alpha$ setting our algorithm performs competitively compared to the baselines, while in the multi-$\alpha$ setting it demonstrates the advantage from Section~\ref{section:validity}. 

\end{document}